\newcommand*{\triplenorm}[1]{{\left\vert\kern-0.25ex\left\vert\kern-0.25ex\left\vert #1
    \right\vert\kern-0.25ex\right\vert\kern-0.25ex\right\vert}}
\renewcommand{\phi}{\varphi}
\renewcommand{\tilde}{\widetilde}
\renewcommand{\hat}{\widehat}
\theoremstyle{plain}
\newtheorem{theorem}{Theorem}[section]
\newtheorem{proposition}[theorem]{Proposition}
\theoremstyle{definition}
\theoremstyle{remark}
\title{Learning local neighborhoods of non-Gaussian graphical models}
\author[1]{Sarah Liaw\footnote{Correspondence to {\tt{sliaw@caltech.edu}}.}}
\begin{document}
\vspace*{-0.5in}

\begin{center} {\LARGE{{Learning local neighborhoods of non-Gaussian \\ 
graphical models: A measure transport approach}}}

{\large{
\vspace*{.3in}
\begin{tabular}{cccc}
Sarah Liaw$^{1}$, Rebecca Morrison$^{2}$, Youssef Marzouk$^{3}$, Ricardo Baptista$^{1}$\\
\hfill \\
\end{tabular}
{
\vspace*{.1in}
\begin{tabular}{c}
				$^1$California Institute of Technology\\
				$^2$University of Colorado Boulder\\
                $^3$Massachusetts Institute of Technology \\
          \\
\end{tabular} 
}

}}

\end{center}

\begin{abstract}
Identifying the Markov properties or conditional independencies of a collection of random variables is a fundamental task in statistics for modeling and inference. Existing approaches often learn the structure of a probabilistic graphical model, which encodes these dependencies, by assuming that the variables follow a distribution with a simple parametric form. Moreover, the computational cost of many algorithms scales poorly for high-dimensional distributions, as they need to estimate all the edges in the graph simultaneously. In this work, we propose a scalable algorithm to infer the conditional independence relationships of each variable by exploiting the local Markov property. The proposed method, named Localized Sparsity Identification for Non-Gaussian Distributions (L-SING), estimates the graph by using flexible classes of transport maps to represent the conditional distribution for each variable. We show that L-SING includes existing approaches, such as neighborhood selection with Lasso, as a special case. We demonstrate the effectiveness of our algorithm in both Gaussian and non-Gaussian settings by comparing it to existing methods. Lastly, we show the scalability of the proposed approach by applying it to high-dimensional non-Gaussian examples, including a biological dataset with more than 150 variables. \looseness-1
\end{abstract}
\footnotetext{Correspondence to {\tt{sliaw@caltech.edu}}.}

\section{Introduction}\label{sec:intro}
Given a collection of random variables $\mathbf{X} = (X_1, \hdots, X_d)$ with probability measure $\mathbf{\nu}_\pi$ and Lebesgue density $\pi$, discovering the conditional independence relationships of $\mathbf{X}$ is an important task in statistics. These dependencies are represented in a graph as edges $E$ between vertices $V$, which correspond to the variables. The resulting graph structure $\mathcal{G} = (V, E)$ is known as a probabilistic graphical model or a Markov random field. 

Many real-world processes, such as gene expression levels, generate continuous and non-Gaussian data, requiring methods that can handle such distributions. Gene expression is regulated by complex networks involving transcription factors which exhibit nonlinear dynamics, leading to non-Gaussian distributions~\citep{Marko_2012}. Other applications include data of financial market returns and climate variables. These datasets are high-dimensional, and existing algorithms that assume normality may fail to correctly characterize the relevant conditional dependencies.

Structure learning algorithms estimate graphs that capture and summarize the conditional dependencies within a dataset, thereby performing model selection. These algorithms are broadly categorized into global and local methods~\citep{koller2009probabilistic}. Global methods reconstruct the entire graph based on global Markov properties, whereas local methods identify the neighborhood set $Nb(k)$ for each node $k$ in the graph $\mathcal{G}$ based on local Markov properties. The neighborhood set defines variables $X_{Nb(k)}$ such that the conditional density of variable $X_k$ satisfies $\pi(x_k| x_{-k}) = \pi(x_k | x_{Nb(k)})$. In other words, $X_k$ is conditionally independent of variables outside the neighborhood $X_{-Nb(k)}$ given $X_{Nb(k)}$. For distributions with a positive probability density functions, global and local Markov properties are equivalent~\citep{Pearl1986GRAPHOIDSGL}.

In this work, we focus on the following graph recovery problem: given i.i.d.\thinspace samples $\{\mathbf{x}^{i}\}_{i=1}^M$ from an (unspecified and possibly non-Gaussian) probability distribution, identify the neighborhood of each node in the graph, i.e., the local Markov properties. %

Existing algorithms for the graph recovery problem that do not assume Gaussianity, such as Sparsity Identification in Non-Gaussian Distributions (SING), rely on the estimation of the underlying data distribution, e.g., by learning a transport map or generative model for the data~\citep{NIPS2017_ea8fcd92, JMLR:v25:21-0022}. To do so, SING estimates the joint density of $\mathbf{X}$, which is often computationally costly for high-dimensional distributions.

To overcome this challenge, we propose Localized Sparsity Identification for Non-Gaussian Distributions (L-SING), which learns transport maps representing the conditional distributions of each node in parallel. Our approach exploits the local Markov property to construct a probabilistic graphical model that encodes the conditional dependencies of the nodes. We summarize the algorithm as follows: (1) learn a transport map for each node from samples; (2) use the estimated maps to define a matrix $\Omega \in \mathbb{R}^{d \times d}$, referred to as a \emph{generalized precision}, which encodes conditional dependencies between non-Gaussian variables; (3) determine the edge set of the graph from the sparsity of $\Omega$.

We show that L-SING generalizes existing methods, such as neighborhood selection with the Lasso and the nonparanormal approach. Finally, we show L-SING's scalability and effectiveness by empirically evaluating it on benchmark problems and a high-dimensional dataset of gene expression levels in ovarian cancer patients.

\section{Related Work}\label{sec:related_work}

Methods to learn probabilistic graphical models are broadly categorized into parametric and non-parametric methods, which identify either global or local Markov properties.

Parametric methods for estimating Markov properties assume the observed data follows a probability distribution whose density and properties are defined by some parameters. %
For Gaussian random variables, conditional dependence is encoded by the sparsity of the precision (inverse covariance) matrix, where zero entries indicate conditional independence between the corresponding variables. Graph structure learning thus reduces to identifying the non-zero entries of the precision matrix in this setting. A widely used approach for this task is the graphical lasso (GLASSO), which solves an $L_1$-penalized maximum likelihood estimation problem for the precision matrix~\citep{JMLR:v9:banerjee08a}.~\citet{10.1093/biostatistics/kxm045} further improved computational efficiency of GLASSO by introducing a coordinate descent algorithm. On the other hand, parametric methods have also been developed to identify local Markov properties by estimating each variable's neighborhood independently. These include greedy selection strategies~\citep{10.1145/2746539.2746631} and penalized maximum likelihood estimators, such as the neighborhood selection method using Lasso regression~\citep{Meinshausen_2006}.

The relationship between the sparsity of the inverse covariance matrix and conditional independence, which is central to many algorithms, does not immediately generalize to non-Gaussian distributions. To address general distributions, semi-parametric methods, such as Gaussian copulas, have been proposed to model non-Gaussian data. For example,~\citet{JMLR:v10:liu09a} assumed observations are generated from marginal nonlinear transformations of a multivariate Gaussian random vector with known Markov properties. %
However, the class of distributions explicitly described by known copulas is relatively limited. While Gaussian copulas introduce non-Gaussianity, it has been observed that they still preserve aspects of the underlying Gaussian structure~\citep{MORRISON2022104983}. Therefore, algorithms that perform well on Gaussian copula-transformed data may struggle with more complex non-Gaussian dependencies.

Previously,~\citet{JMLR:v25:21-0022, NIPS2017_ea8fcd92} proposed SING, which learns the global Markov structure of continuous and non-Gaussian distributions. SING constructs a lower-triangular transport map to estimate the graph structure and iteratively refines the estimated map until the number of edges converges. To do so, SING estimates a multivariate transport map for the entire set of variables simultaneously, storing $\mathcal{O}(d^2)$ entries and thus becoming computationally and memory intensive in high dimensions. Alternatively, L-SING overcomes these challenges by learning local structure: it independently estimates the  neighborhoods for each variable, eliminating the need to maintain a single global transport map. As a result, it only needs to maintain the subset of variable relevant for each node. %
This local method has a lower memory requirement than the global method in high-dimensional settings. Hence, it allows the map for each node to be more expressive and thereby better capture its conditional distribution, e.g., by incorporating larger neural networks with more layers or more parameters, without exceeding practical memory limits. Moreover, since each node’s neighborhood estimation is independent, L-SING can parallelize the learning process for each variable across multiple cores or machines. This reduces both the run time and the overall memory requirements for learning the graph.

Recently,~\citet{Dong_2022} proposed a neighborhood selection method %
that approximates the conditional density of each variable based on a smoothing spline ANOVA decomposition. In contrast, L-SING %
constructs transport maps to represent arbitrary conditional distributions.  
For certain classes of distributions, constructing transport maps offers a more computationally efficient alternative to direct density estimation.

\section{Transport Maps}\label{sec:tm}

A core step of L-SING is estimating a transport map using samples from $\pi$. Transport maps represent a target random variable as a transformation of a reference random variable, such as a standard normal. Given a target probability measure $\mathbf{\nu}_\pi$ and a reference probability measure $\mathbf{\nu}_{\eta}$, both defined on $\mathbb{R}^d$, a transport map $S\colon \mathbb{R}^d \to \mathbb{R}^d$ is a measurable function that couples these two variables such that the pushforward measure of $\mathbf{\nu}_\pi$ through $S$ is $\mathbf{\nu}_\eta$. We denote the pushforward measure as $S_{\sharp} \mathbf{\nu}_\pi$. This condition implies $S(\mathbf{X}) = \mathbf{Z}$ for $\mathbf{X} \sim \mathbf{\nu}_{\pi}$ and $\mathbf{Z} \sim \mathbf{\nu}_{\eta}$. 

If the measures $\mathbf{\nu}_\pi$ and $\mathbf{\nu}_\eta$ have densities $\pi$ and $\eta$, respectively, we denote the pushforward density as $S_{\sharp}\pi = \eta$. When $S$ is invertible, the measure of $S^{-1}(\mathbf{Z})$ corresponds to the pullback density, denoted as $S^{\sharp}\eta = \pi$. For a diffeomorphism $S$, the pushforward and pullback densities can be expressed using the change-of-variables formula:
\begin{align*}
S^{\sharp}\eta(\mathbf{x}) &= \eta \circ S(\mathbf{x}) \cdot \left|\det \nabla S(\mathbf{x})\right| \\
S_{\sharp}\pi(\mathbf{z}) &= \pi \circ S^{-1}(\mathbf{z}) \cdot \left|\det \nabla S^{-1}(\mathbf{z})\right|,
\end{align*}
where $\det \nabla S(\mathbf{x})$ is the determinant of the Jacobian of the map $S$ evaluated at $\mathbf{x}$, and $\circ$ is the composition operator.

In this work, we choose a standard isotropic Gaussian reference, $\nu_\eta = \mathcal{N}(\mathbf{0}, \mathbf{I_d})$ and solve an optimization problem to learn the transport map $S$ given only samples from the target density $\pi$. When $\pi$ and $\eta$ are strictly positive and smooth densities,~\citet{Baptista_2023, Marzouk_2016} showed how to learn a lower-triangular transport map $S$ which has the form:
\begin{equation}\label{eq:lower-tri-tm}
S(\mathbf{x}) = \begin{bmatrix*}[l]
S^1(x_1) \\
S^2(x_1, x_2) \\
S^3(x_1, x_2, x_3) \\
\vdots \\
S^d(x_1, \ldots, x_d) \\
\end{bmatrix*},
\end{equation}
where $S^k$ is a monotone increasing function of $x_k$ for all $(x_1,\dots,x_{k-1})$, i.e., $\partial_{k} S^k \coloneqq \partial_{x_k} S^k > 0$. A feature of lower-triangular maps~\eqref{eq:lower-tri-tm} is their suitability for causal dependencies. Each component $S^k$ represents the conditional distribution of node $X_k$ given the preceding variables in the specified ordering $(X_1,\dots,X_{k-1})$. In particular, for a reference distribution with independent components whose density factorizes as $\eta(\mathbf{x}) = \prod_{k=1}^d \eta_k(x_k)$, component $S^k$ pushes forward the marginal conditional distribution of $X_k$ to the marginal distribution of $Z_k$. That is 
\begin{equation}\label{eq:knb}
    S^k(x_1,\dots,x_{k-1},\cdot)^\sharp \eta_k(x_k) = \pi(x_k|x_1,\dots,x_{k-1}),
\end{equation}
for all values of the conditioning variables $(x_1,\dots,x_{k-1}).$ Moreover, if $X_k$ is conditionally independent of $X_j$ given $X_{(1:k-1) \setminus j}$, then both the density $\pi(x_k|x_1,\dots,x_{k-1})$ and the map $S^k$ do not depend on $x_j$; see~\citet{JMLR:v19:17-747} for more details on the relationship between Markov properties of $\pi$ and the sparsity of triangular maps.

In L-SING, we exploit the relationship between the sparsity of the map component and conditional dependence. In particular, we seek map components $S^k$ that represent the conditional distribution of each variable $X_k$ given all other variables $X_{-k} \in \mathbb{R}^{d-1}$. Given that the last variable of a triangular transport map depends on all variables, this can be seen as learning a component $S^k \colon \mathbb{R}^{d} \rightarrow \mathbb{R}$ to describe the conditional distribution for $X_k$ as
\begin{equation}\label{eq:knb_2}
S^k(x_{-k},\cdot)^\sharp \eta_k(x_k) = \pi(x_k|x_{-k}).
\end{equation}
Our goal is to extract the neighborhood set $Nb(k) \subset \{1,\dots,d\} \setminus k$ for $X_k$ by learning a map $S^k$ that (sparsely) depends on a subset $Nb(k)$ of its inputs. 

\subsection{Parameterization of the Transport Map}\label{sec:param_tm}
To parameterize the transport map component $S^k$ in~\eqref{eq:knb_2}, we use Unconstrained Monotonic Neural Networks (UMNNs) to construct invertible transformations~\citep{NEURIPS2019_2a084e55}. UMNNs define a strictly monotonic function $U$ as
\begin{equation*}
U(x; \psi) = \int_0^x f(t; \psi) \, dt + \beta,
\end{equation*}
where \(f(t; \psi)\colon \mathbb{R} \rightarrow \mathbb{R}^+\) is a strictly positive parametric function implemented via an unconstrained neural network with parameters \(\psi\), and \(\beta \in \mathbb{R}\) is a learnable bias term. The positivity of $f$ is enforced using activation functions like \(\exp\) and softplus applied to the network's output.

Since the derivative \(U'(x;\psi) = f(x; \psi) > 0\) everywhere, \(U(x, ; \psi)\) is strictly monotonic and therefore invertible. This property makes UMNNs suitable for parameterizing transport maps. Additionally, UMNNs allow $f$ to depend on auxiliary input variables, enabling flexible modeling of monotonic functions without restrictive architectural constraints.

In L-SING, an UMNN is used to parameterize each component of the transport map. In particular, we seek \(S^k \in \mathcal{S}_k\) where $\mathcal{S}_k$ is a class of monotonic functions implemented as a UMNN with $d$ input features $\forall k \in [1, d]$ with \(\partial_{k} S^k > 0\). We note that the partial derivative of \(S^k\) %
only requires a single forward pass through the neural network representing the function $f$. This is useful in L-SING to evaluate the conditional density $(S^k)^\sharp \eta_k$, which requires the derivative of the map component.

\section{Learning the Transport Map}\label{sec:learning_map}

To learn the parameters of $S^k$, we formulate the optimization as the solution to a single convex problem. The objective is to minimize
\begin{equation} \label{eq:KLobjective}
    S^k \mapsto \mathbb{E}_{\pi(x_{-k})}[D_{\textrm{KL}}(S^k(x_{-k},\cdot)_{\sharp} \pi(\cdot|x_{-k}) || \eta_k)],
\end{equation} which is equivalent to maximizing the log-likelihood of $\pi(x_k|x_{-k})$ under the model given by the transport map; proof of this statement and the derivation of the sample-based objective is deferred to Appendix~\ref{app:theoretical_dets}. Simply, the approach aims to find the transport map $S^k$ that transforms the conditional distribution $\pi(x_k|x_{-k})$ into a simpler reference distribution $\eta_k$. For the remainder of this work, we take $\eta_k$ to be standard Gaussian.

Given $M$ samples from $\pi$, a regularized maximum likelihood estimation problem for $S^k$ is given by
\begin{equation}\label{eq:tm}
\begin{aligned}
\min_{S^k} \frac{1}{M} \sum_{i=1}^M \left[\frac{1}{2} (S^k)^2(\mathbf{x}^i) - \log \partial_k S^k (\mathbf{x}^i)\right] + \lambda \Phi(S^k)\
\\
\text{s.t. } S^k \in \mathcal{S}_k; \ \ \partial_k S^k >0, \ \ \ \ \ \  (\pi - \text{a.e.})\\
\end{aligned}
\end{equation}
where $\lambda > 0$ is a regularization parameter and %
$\Phi$ is the regularization penalty term  from~\citet{rosasco2012nonparametric} that is used to promote sparse functional dependence:

\begin{equation}\label{eq:reg}
\Phi(S^k) := %
\sum_{j=1}^d \sqrt{\frac{1}{M}\sum_{i=1}^M \left(\frac{\partial S^k(\mathbf{x}^{i})}{\partial x_j}\right)^2}.
\end{equation}

The optimal parameter $\lambda$ is determined in our experiments 
by minimizing the validation loss for the objective in~\eqref{eq:tm}. %

\subsection{Connections to Existing Methods}
To show the generality of L-SING, we demonstrate the connection of the learning problem above to existing methods for Gaussian and nonparanormal distributions.\\

\noindent \textbf{(Gaussian Case)} For a Gaussian vector $\mathbf{X} \sim \mathcal{N} (\mu, \Sigma)$ with mean $\mu \in \mathbb{R}^d$ and non-singular covariance $\Sigma \in \mathbb{R}^{d\times d}$,~\citet{Meinshausen_2006} proposed a neighborhood selection method for estimating the neighborhood sets $\{Nb(k) : 1 \leq k \leq d\}$ through successive linear regressions. 
The neighborhood selection with Lasso estimates the regression coefficients $\theta^k \in \mathbb{R}^{d-1}$ of $X_k$ given co-variates $X_{-k}$ by solving the regularized optimization problem 
\begin{equation}\label{eq:gauss-min}
    \hat{\theta}^k = \arg\min_{\theta} \|\mathbf{X}_k - \mathbf{X}_{-k}\theta\|_2^2 + \lambda \|\theta\|_1,
\end{equation}
where the rows of $\mathbf{X}_k \in \mathbb{R}^M, \mathbf{X}_{-k} \in \mathbb{R}^{M \times (d-1)}$ contain $M$ data samples, and $\|\theta\|_1 $ is the $L_1$-norm of the coefficient vector, which is used to promote sparsity in the solution. The non-zero entries of the resulting vector $\hat\theta^k$ defines the neighborhood set $Nb(k)$.

\begin{proposition}
For a linear transport map component $S^k$, the  optimization problem in~\eqref{eq:tm} reduces to the optimization problem in~\eqref{eq:gauss-min}  for neighborhood selection with the Lasso. %
\end{proposition}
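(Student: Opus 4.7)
The plan is to substitute a linear parameterization of $S^k$ into~\eqref{eq:tm} and show that, after reparameterization and the usual scale/intercept normalizations, each term matches the corresponding piece of the Lasso problem~\eqref{eq:gauss-min}. First I would write the generic affine candidate as
\[
S^k(\mathbf{x}) = \alpha x_k + \beta^\top x_{-k} + c,
\]
with $\alpha > 0$ to enforce $\partial_k S^k > 0$, and compute each ingredient of~\eqref{eq:tm} in closed form: $(S^k(\mathbf{x}^i))^2 = (\alpha x_k^i + \beta^\top x_{-k}^i + c)^2$, $\log \partial_k S^k = \log \alpha$, and---since the partial derivatives $\partial_k S^k = \alpha$ and $\partial_j S^k = \beta_j$ for $j\neq k$ are constant in $\mathbf{x}^i$---the inner averages in~\eqref{eq:reg} collapse to give $\Phi(S^k) = \alpha + \|\beta\|_1$.

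Next I would reparameterize $\beta = -\alpha\theta$ and $c = -\alpha\mu$, so that $S^k(\mathbf{x}) = \alpha(x_k - \theta^\top x_{-k} - \mu)$. Substituting, the L-SING objective becomes
\[
\frac{\alpha^2}{2M}\|\mathbf{X}_k - \mathbf{X}_{-k}\theta - \mu\mathbf{1}\|_2^2 - \log\alpha + \lambda\alpha(1 + \|\theta\|_1).
\]
The Meinshausen--B\"uhlmann problem~\eqref{eq:gauss-min} is written for centered data and implicitly assumes a unit-variance noise scale; the analogous convention in the transport map picture is $\alpha = 1$ (matching the standard-Gaussian reference $\eta_k$) together with $\mu = 0$ (data centering, or equivalently profiling $\mu$ out via its optimum $\mu = \bar{x}_k - \theta^\top \bar{x}_{-k}$). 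Under these identifications, the $\log\alpha$ term vanishes and the regularizer decouples to $\lambda(1 + \|\theta\|_1)$.

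What remains is
\[
\frac{1}{2M}\|\mathbf{X}_k - \mathbf{X}_{-k}\theta\|_2^2 + \lambda(1 + \|\theta\|_1),
\]
which is exactly~\eqref{eq:gauss-min} after discarding the additive constant $\lambda$ and rescaling the regularization parameter by $2M$. In particular, the support of the minimizer $\hat\theta$ coincides with the support of the corresponding $\hat\beta$, so the neighborhood $Nb(k)$ estimated by L-SING in the linear/Gaussian regime agrees with the one returned by MB neighborhood selection with the Lasso.

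The step I expect to be the main obstacle is handling the coupling between the scale parameter $\alpha$ and the $\ell_1$ penalty through the product $\alpha(1+\|\theta\|_1)$: without fixing $\alpha$, the two problems are only equivalent up to a data-dependent rescaling of $\lambda$. The cleanest resolution is to interpret ``linear transport map component'' as an affine map of unit slope in $x_k$ (forced by the unit-variance choice of $\eta_k$), which mirrors MB's Gaussian regression formulation. If instead $\alpha$ is left free, I would argue that its optimum only monotonically rescales the effective $\lambda$, so the recovered support---and hence $Nb(k)$---is unchanged.
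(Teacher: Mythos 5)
Your proposal is correct and follows essentially the same route as the paper's proof: substitute the linear parameterization into~\eqref{eq:tm}, observe that the constant partial derivatives collapse $\Phi(S^k)$ to an $\ell_1$ norm, and factor out the coefficient on $x_k$ to recover the Lasso objective with a rescaled penalty. Your discussion of the coupling between the slope $\alpha$ and the penalty is exactly the paper's $\tilde{\lambda} = \lambda(2M)/|a_k|$ rescaling (the paper leaves $a_k$ free rather than fixing $\alpha = 1$, and absorbs the sign by noting the equivalence holds up to a sign in the coefficients), so the two arguments differ only in bookkeeping such as your inclusion of an intercept.
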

\begin{proof}
For a linear map %
component %
    $S^k(\mathbf{x}_{1:d}) = \sum_{l=1}^d a_lx_l$ %
where $(a_l) \in \mathbb{R}$ are scalar parameters, the objective in~\eqref{eq:tm} is
\begin{align*}
   & \min_{S^k} \frac{1}{M} \sum_{i=1}^M \left[ \frac{1}{2}(S^k)^2 (\mathbf{x}^i) - \log \partial_k S^k(\mathbf{x}^i) \right] + \lambda \Phi(S^k) \\
    =& \min_{a_{1:d}} \frac{1}{M} \sum_{i=1}^M \left[ \frac{1}{2} \left( \sum_{l=1}^d a_lx_l^i \right)^2  - \log a_k \right] + \lambda\sum_{j=1}^d|a_j|
\end{align*}
Consider the change of variables $b_l = \frac{a_l}{a_k}$ for all $l \neq k$. Then, we can write the optimization problem as

\begin{align*}
  \min_{\mathbf{b}_{-k},a_k} \frac{1}{M} \sum_{i=1}^M \left[ \frac{1}{2} (a_k)^2 \left( \sum_{l \neq k} b_l x_l^i + x_k^i \right)^2 - \log a_k \right] 
  + \lambda\left(\sum_{j \neq k}|b_j a_k|+ |a_k|\right),
\end{align*}
where $\mathbf{b}_{-k} = (b_1,\dots,b_{k-1},b_k,\dots,b_{d-1}).$

We identify the re-regression coefficients $\mathbf{b}_{-k}$ by solving the problem
\begin{align*}
    \min_{\mathbf{b}_{-k}} \left\| \mathbf{b}_{-k} \mathbf{X}_{-k} + \mathbf{X}_k \right\|^2 + \tilde{\lambda} \|\mathbf{b}_{-k}\|_1,
\end{align*}
where $\tilde{\lambda} = \lambda (2M) / |a_k|$. This objective has the form as the problem in~\eqref{eq:gauss-min} up to a sign in the coefficients $\mathbf{b}_{-k}$.
\end{proof}
While the relationship between L-SING and neighborhood selection with Lasso is only shown for Gaussian distributions, L-SING extends beyond Gaussians by using nonlinear functions (e.g., higher-order polynomials) to parameterize the transport map.\\

\noindent \textbf{(Nonparanormal Case)} A random vector \( \mathbf{X} = (X_1, \ldots, X_d) \) has a nonparanormal distribution if there exists a set of functions \( \{f_j\}_{j=1}^d \) such that \( \mathbf{Z} =\mathbf{f}(\mathbf{X}) \sim \mathcal{N}(\mu, \Sigma) \), where \( \mathbf{f}(\mathbf{X}) = (f_1(X_1), \ldots, f_d(X_d)) \).~\citet{JMLR:v10:liu09a} proposed applying GLASSO to the transformed data to estimate the undirected graph from the sparsity pattern of the estimated precision matrix:
\[
\hat{\Theta} = \arg\min_{\Theta \geq 0} \left( \operatorname{Tr}(S(\Tilde{\mathbf{f}})\Theta) - \log \det(\Theta) + \lambda \sum_{j \neq k} |\Theta_{jk}| \right),
\]
where $S(\Tilde{\mathbf{f}})$ is a sample covariance estimator of $\Tilde{\mathbf{f}}(\mathbf{X})$ based on an estimator $\Tilde{\mathbf{f}}$ of $\mathbf{f}$.

\begin{proposition}
Let $\mathbf{X}$ be a random vector following a nonparanormal distribution, i.e.,   
there exists monotonic functions $\mathbf{f}$ 
such that $\mathbf{f}(\mathbf{X}) \sim \mathcal{N}(\mathbf{0}, \Sigma)$ for some strictly positive definite $\Sigma$. Then, the transport map $S(\mathbf{x} ) = \Sigma^{-1/2} \mathbf{f}(\mathbf{x} )$ pushes forward $\mathbf{X}$ to a standard normal random variable.
\end{proposition}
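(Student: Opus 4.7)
The plan is to verify directly from the definition of pushforward that $S_\sharp \nu_\pi = \mathcal{N}(\mathbf{0}, \mathbf{I}_d)$, where $\nu_\pi$ denotes the law of $\mathbf{X}$. By hypothesis, the intermediate random vector $\mathbf{Z} := \mathbf{f}(\mathbf{X})$ is multivariate Gaussian with mean $\mathbf{0}$ and covariance $\Sigma \succ 0$, so it suffices to analyze the linear transformation $\mathbf{Y} := \Sigma^{-1/2}\mathbf{Z}$ and confirm that $\mathbf{Y} \sim \mathcal{N}(\mathbf{0}, \mathbf{I}_d)$. Since $\Sigma$ is symmetric positive definite, $\Sigma^{-1/2}$ is well-defined (e.g., via the spectral decomposition) and is itself symmetric positive definite.

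First I would invoke the standard fact that affine transformations of Gaussian random vectors remain Gaussian: $\mathbf{Y} = \Sigma^{-1/2}\mathbf{Z}$ is therefore Gaussian. Next I would compute its mean and covariance:
\begin{align*}
\mathbb{E}[\mathbf{Y}] &= \Sigma^{-1/2}\, \mathbb{E}[\mathbf{Z}] = \mathbf{0}, \\
\operatorname{Cov}(\mathbf{Y}) &= \Sigma^{-1/2}\, \Sigma\, \Sigma^{-1/2} = \mathbf{I}_d,
\end{align*}
using the symmetry of $\Sigma^{-1/2}$ so that $(\Sigma^{-1/2})^\top = \Sigma^{-1/2}$. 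This identifies the distribution of $\mathbf{Y}$ as the standard isotropic Gaussian $\mathcal{N}(\mathbf{0},\mathbf{I}_d)$.

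Finally, I would chain the two transformations: $S(\mathbf{X}) = \Sigma^{-1/2}\mathbf{f}(\mathbf{X}) = \Sigma^{-1/2}\mathbf{Z} = \mathbf{Y}$, so $S(\mathbf{X}) \sim \mathcal{N}(\mathbf{0},\mathbf{I}_d)$, which is equivalent to the pushforward statement $S_\sharp \nu_\pi = \nu_\eta$ with $\nu_\eta = \mathcal{N}(\mathbf{0},\mathbf{I}_d)$.

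There is essentially no hard step here; the argument is a two-line application of the affine-invariance of Gaussians. The only mild subtlety worth flagging is that the proposition does not explicitly require $\mathbf{f}$ to be a diffeomorphism, so the change-of-variables formula in density form is not needed; working at the level of random variables (or laws) suffices. If one prefers the density-level statement, one could alternatively substitute $\mathbf{x} \mapsto S(\mathbf{x})$ into the Gaussian density, use $|\det\nabla S(\mathbf{x})| = |\det \Sigma^{-1/2}|\cdot |\det \nabla \mathbf{f}(\mathbf{x})|$ together with the change-of-variables formula recalled in Section~\ref{sec:tm}, and recover the standard normal density; but the direct distributional argument above is cleaner.
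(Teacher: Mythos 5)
Your proof is correct and follows essentially the same route as the paper's: both arguments note that $\mathbf{f}(\mathbf{X})$ is Gaussian, that $\Sigma^{-1/2}$ is well defined for a symmetric positive definite $\Sigma$, and then verify the mean and covariance of the linear transformation $\Sigma^{-1/2}\mathbf{f}(\mathbf{X})$ to conclude it is $\mathcal{N}(\mathbf{0},\mathbf{I}_d)$. The only cosmetic difference is your use of the symmetric (spectral) square root versus the paper's mention of a Cholesky factor, which does not change the argument.
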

\begin{proof}

To show that this is a valid transport map, we show that 
$S(\mathbf{X}) \sim \mathcal{N} (\mathbf{0}, \mathbf{I})$. First, we recall that the covariance matrix $\Sigma$ is symmetric and positive definite, and hence invertible. For a strictly positive definite $\Sigma$, we can use the Cholesky decomposition (or another matrix square root) to obtain $\Sigma^{-1/2}$. Next, we verify that $S(\mathbf{X} ) \sim \mathcal{N}(\mathbf{0}, \mathbf{I})$. Given that $f(\mathbf{X})$ is Gaussian, we just need to verify the first two moments of the linear transformation of $ \mathbf{f}(\mathbf{X} )$:
\begin{align*}
\mathbb{E}[S(\mathbf{X} )] &= \mathbb{E}[\Sigma^{-1/2} \mathbf{f}(\mathbf{X} )] = \Sigma^{-1/2} \mathbb{E}[\mathbf{f}(\mathbf{X} )] = \mathbf{0}\\
\text{Cov}[S(\mathbf{X} )] %
&= \Sigma^{-1/2} \mathbb{E}[\mathbf{f}(\mathbf{X} )\mathbf{f}(\mathbf{X} )^T] (\Sigma^{-1/2})^T %
= \mathbf{I}.
\end{align*}
\end{proof}
The result above shows that nonparanormal distributions can be characterized using transport maps, and their components define its conditionals. Thus, L-SING includes nonparanormal methods as a special case.

\section{Computing the Generalized Precision}\label{sec:gp}
Finally, we show how to compute a matrix encoding the conditional independence structure using the estimated transport map. Recall that in the Gaussian setting, the inverse covariance matrix $\Sigma^{-1}$ is also the precision matrix,where $\Sigma_{kj}^{-1} = 0 \leftrightarrow X_k\perp\!\!\!\perp X_j \,|\, X_{-kj}$~\citep{NIPS2012_6ba1085b}. For non-Gaussian distributions, we extend this concept using the transport maps that represent the conditional distributions of $\mathbf{X}$. Following~\citet{NIPS2017_ea8fcd92}, we consider the generalized precision $\Omega \in \mathbb{R}^{d \times d}$ with entries $$\Omega_{jk} = \mathbb{E}|\partial_j \partial_k \log \pi(\mathbf{x})| = \mathbb{E}|\partial_j \partial_k \log \pi(x_k|x_{-k})|.$$\citet{JMLR:v19:17-747} showed that $\partial_{j}\partial_{k} \log \pi(\mathbf{x}) = 0$ for all $\mathbf{x}$ $\iff X_j \perp\!\!\!\perp X_k \,|\, X_{-kj}$, so the sparsity of the generalized precision matrix encodes pairwise conditional independence properties (similarly to the precision matrix in the Gaussian setting) for distributions whose log-density is twice differentiable.

Given that the map component $S^k$ characterizes the conditional distribution of the target density, we can express the entries of $\Omega$ in terms of the map as:
\begin{align*}
\Omega_{jk} &= \mathbb{E}|\partial_j\partial_k[\log \eta_k (S^k(\mathbf{x}) ) + \log\partial_k S^k(\mathbf{x})]| \\
&=\mathbb{E}\left|\partial_j\partial_k\left[-\frac{1}{2} (S^k)^2 (\mathbf{x}) + \log \partial_k S^k (\mathbf{x})\right]\right|.
\end{align*}
An estimator of this matrix entry based on $N$ i.i.d.\thinspace samples from $\pi$ is given by
\begin{align*}
\hat{\Omega}_{jk} := \frac{1}{N} \sum_{i=1}^N \left|\partial_j\partial_k \left[-\frac{1}{2} (S^k)^2 (\mathbf{x}^i) + \log \partial_k S^k (\mathbf{x}^i)\right]\right|
\end{align*}
For each variable $k$, we use the computed transport map $S^k$ to determine the neighborhood set $\text{Nb}(k)$ based on the non-zero entries $\hat\Omega_{jk}$ for $j \in \{1,\dots,d\} \setminus k$. %
These entries identify edges in the graphical model. In particular, we say that there exists an edge between variables $X_k$ and $X_j$ if $\Omega_{jk}$ meets the conditional independence criteria:
\[
\begin{aligned}
& X_k \perp\!\!\!\perp X_j \,|\, X_{V\backslash\{kj\}}  \Leftrightarrow X_k \perp\!\!\!\perp X_{j \in V\backslash \text{Nb}(k)} \,|\, X_{\text{Nb}(k)}  \Leftrightarrow \partial_{k}\partial_j \log \pi(\mathbf{x}) = 0 \; \forall \; \mathbf{x} \in \mathbb{R}^d
\end{aligned}
\]
Thus, the estimation of the generalized precision matrix allows us to compute an edge set that reflects the conditional independence structure of the data. We note that for each pair of edges $(j,k)$, we obtain two estimators for the conditional independence based on the dependence of map component $S^j$ on $x_k$ and $S^k$ on $x_j$. To reconcile these two estimates, which are theoretically equal when the estimator for the conditional distributions is correct, we compute a symmetrized version of the generalized precision matrix.

\section{L-SING Algorithm} \label{sec:alg}

In this section, we present L-SING for learning the Markov structure of a continuous and (possibly) non-Gaussian distributions. The complete procedure is outlined in Algorithm~\ref{alg:lsing}. 

\begin{algorithm}
\caption{L-SING Algorithm}\label{alg:lsing}

\begin{algorithmic}[1]

    \State \textbf{Input} i.i.d.\thinspace samples $\{\textbf{x}^i\}_{i=1}^{M+N}$,  transport class $\mathcal{S}_k$.
    \State  \textbf{Output} Generalized precision matrix $\hat{\Omega}$.
    \For{each map component $S^k \in \mathcal{S}_k$}
    \For{fixed number of epochs}
    \State Compute the regularized loss using $M$ samples
    \State Back-propagate loss and update $S^k$ parameters
    \EndFor
    \State Compute entries of the generalized precision matrix $\hat{\Omega}_{jk} \ \forall j$ %
    using $N$ samples
    \State Set $\hat{\Omega}_{kk} = 1$ %
    \EndFor\\
    \Return $\hat{\Omega}^{\textrm{L-SING}} := \frac{1}{2}(\hat{\Omega} + \hat{\Omega}^T)$
\end{algorithmic}
\end{algorithm}
In practice, we split the provided samples from the target distribution $\pi$ into training, validation, and estimation sets. The regularized objective in~\eqref{eq:tm} is optimized using the training set, while the unregularized negative log-likelihood is evaluated on the validation set to select the optimal regularization parameter $\lambda$. To ensure model generalization, we implement early stopping during the estimation of $S^k$. That is, training stops if the validation loss fails to improve for 10 consecutive epochs. Finally, we evaluate $\hat{\Omega}$ using the estimation set to avoid biases arising from reusing samples for both learning the map and computing the generalized precision.

\subsection{Edge Set Generation}

After computing and normalizing the generalized precision matrix $\hat{\Omega}$ (scaled to have maximum value 1), we generate a sparse edge set for the graphical model by thresholding:
\begin{enumerate}
    \item Choose a threshold value $\tau > 0$.
    \item For each pair of variables $(j,k)$ for all $j,k \in [1, d]$:
\begin{itemize}
    \item If $|\hat{\Omega}_{jk}| > \tau$, add an edge between variables $j$ and $k$.
    \item Otherwise, no edge is added.
\end{itemize}
\end{enumerate}
Given that the choice of threshold $\tau$ affects the sparsity of the graph, we demonstrate the sensitivity of the graph sparsity and false positive rates (FPR) to variations in $\tau$ in our numerical experiments. In practice, $\tau$ can be selected based on prior knowledge about the expected graph density.

\section{Numerical Results} \label{sec:exp}

We now aim to answer the following questions: (1) Can L-SING accurately quantify the conditional dependencies of $\mathbf{X}$ without relying on assumptions about the distribution of $\mathbf{X}$? (2) Is L-SING computationally tractable for high-dimensional problems? The first and second experiments address question (1), while the second and third experiments address question (2). Additionally, we compare the performance of $\hat{\Omega}^{\text{L-SING}}$ to existing methods on the same test dataset. Our code is publicly available at \url{https://github.com/SarahLiaw/L-SING}; details of the experiments not mentioned in the main text are described in Appendix~\ref{app:exp_details}.

\subsection{Gaussian Distribution}

To validate L-SING against existing parametric methods, we evaluated it on data sampled from a Gaussian distribution. We first generated a symmetric, positive definite matrix $\Sigma^{-1} = \Omega$, where $\Omega$ is the true precision matrix. Using this distribution, we drew $M = 5,000$ training samples from a $d$-dimensional multivariate normal distribution $\mathbf{X} \sim \mathcal{N}(\mathbf{0}, \Sigma)$, with $d = 10$. After training L-SING, we computed $\hat{\Omega}$ using an evaluation set of $N = 10,000$ samples drawn from the same distribution.

\begin{figure}[!ht]
    \centering
    \begin{subfigure}[b]{0.47\linewidth}
        \centering
        \includegraphics[width=\linewidth]{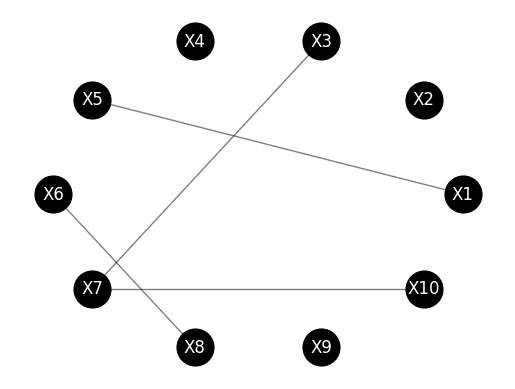}
         \caption{}
        \label{fig:sub2}
    \end{subfigure}
    \begin{subfigure}[b]{0.47\linewidth}
        \centering
        \includegraphics[width=0.75\linewidth]{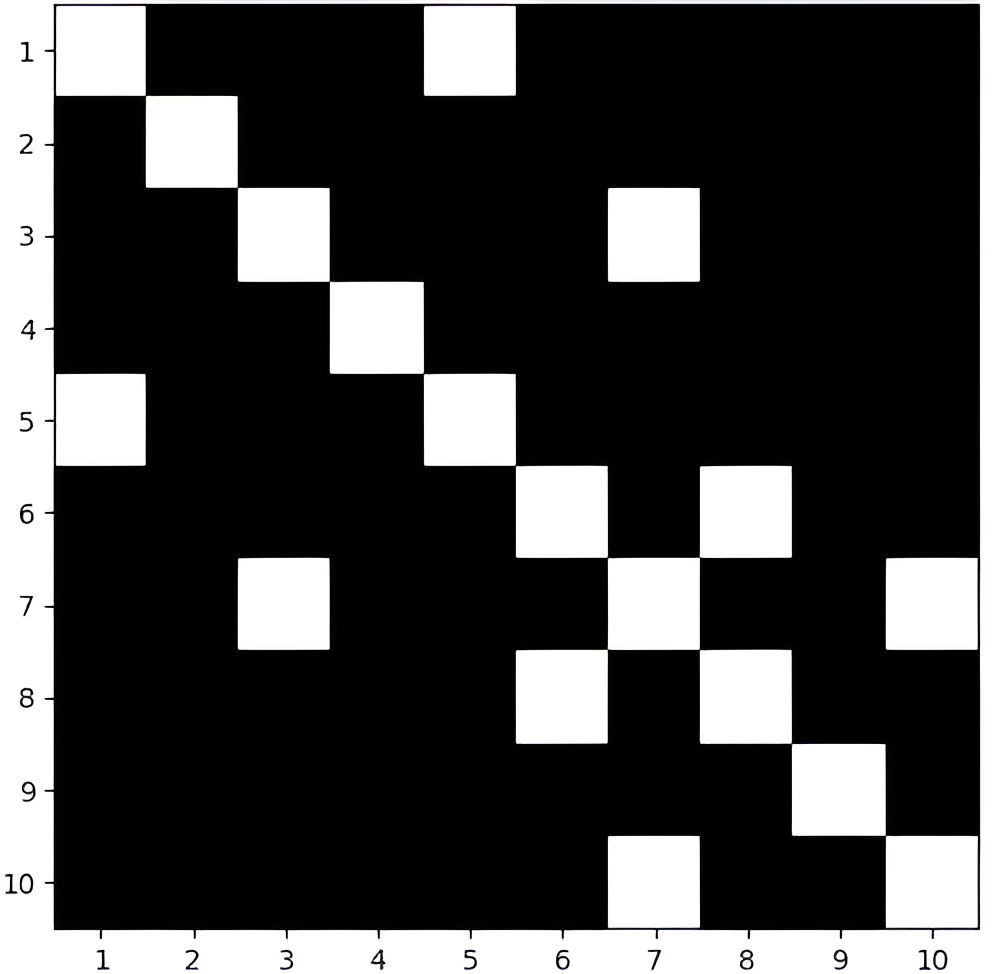}
        \caption{}
        \label{fig:sub1}
    \end{subfigure}
    \caption{(a) The undirected graphical model; (b) Adjacency matrix of true graph (white corresponds to an edge, black to no edge) for the 10-dimensional Gaussian distribution.}
    \label{fig:overall}
\end{figure}

\paragraph{Evaluation.}Figure~\ref{fig:overall} visualizes the true underlying graph and its adjacency matrix. Figure~\ref{fig:overall_gauss} compares the estimated precision matrices $\hat{\Omega}$ computed using L-SING and GLASSO.

\begin{figure}[!ht]
    \centering
    \begin{subfigure}[b]{0.4\linewidth}
        \centering
        \includegraphics[width=\linewidth]{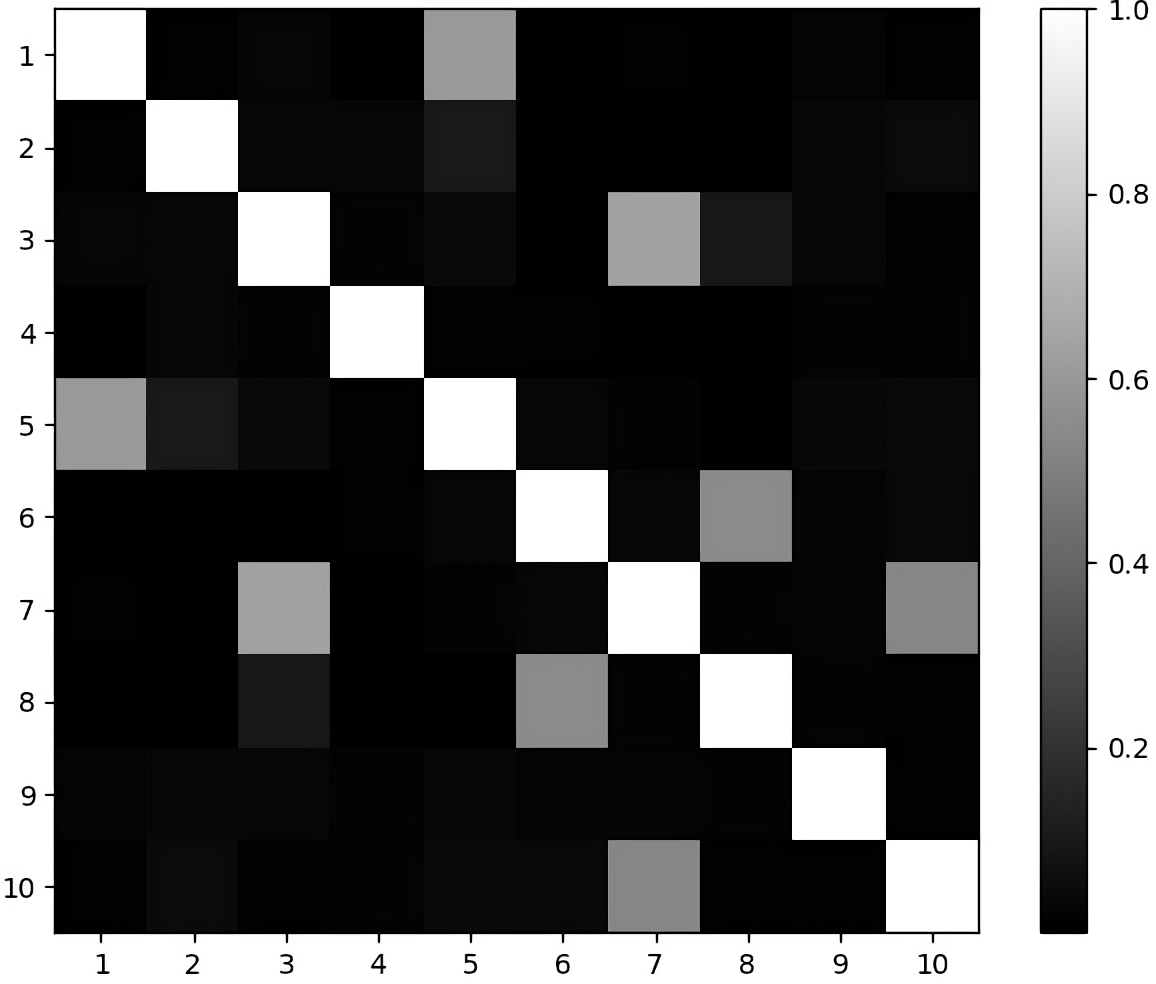}
         \caption{$\hat{\Omega}^{\text{L-SING}}$}
        \label{fig:compute_local_gauss}
    \end{subfigure}
    \hspace{0.5cm}
    \begin{subfigure}[b]{0.4\linewidth}
        \centering
        \includegraphics[width=\linewidth]{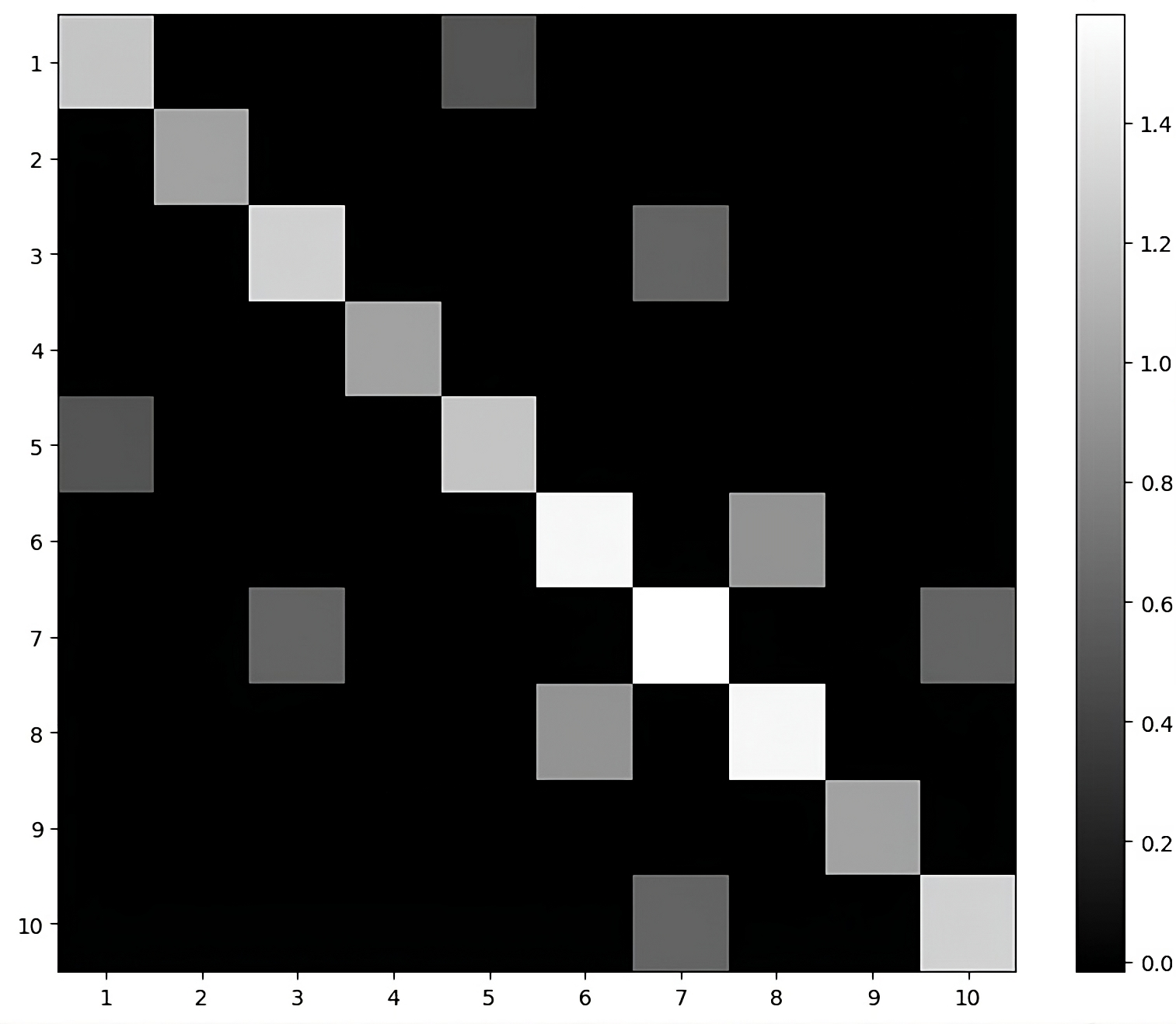}
        \caption{$\hat{\Omega}^{\text{GLASSO}}$}
        \label{fig:compute_glasso_gauss}
    \end{subfigure}
    \caption{Generalized precision matrix for L-SING and the estimated precision amtrix for GLASSO computed using $N = 10,000$ Gaussian evaluation samples.}
    \label{fig:overall_gauss}
\end{figure}

The heatmap colors in Figures~\ref{fig:compute_local_gauss} and~\ref{fig:compute_glasso_gauss} represent the magnitude of the entries in $\hat{\Omega}$. Figure~\ref{fig:compute_local_gauss} shows that L-SING accurately recovers the overall sparsity of $\Omega$. For $\tau = 0.2$, the estimated adjacency matrix matches the true adjacency matrix in Figure~\ref{fig:sub1}, with all edges correctly identified. The non-zero off-diagonal entries are within $\pm 0.1$ of the corresponding values in the true normalized precision matrix (Figure~\ref{fig:sub1}). 

Figure~\ref{fig:compute_glasso_gauss} presents results for the Gaussian simulation using GLASSO. Comparing the off-diagonal entries of $\hat{\Omega}^{\text{GLASSO}}$ to $\Omega$, edges \((1, 5)\) and \((3, 7)\) are within a $\pm 0.03$ range of their true values, while all other edges fall within $\pm 0.1$. Note that GLASSO operates directly on the test samples without requiring training. Figure~\ref{fig:sensitivity_gauss} plots the FPR against the number of training samples for $\tau = 0.20, 0.10, 0.05$. L-SING's FPR decreases with increasing $M$, which demonstrates its consistency. Higher $\tau$ (e.g., $0.20$) results in lower FPR across all sample sizes, indicating more conservative edge detection. The choice of $\tau$ results in a trade-off between sensitivity and specificity in edge detection.

\begin{figure}[h!]
    \centering
    \includegraphics[width=0.8\linewidth]{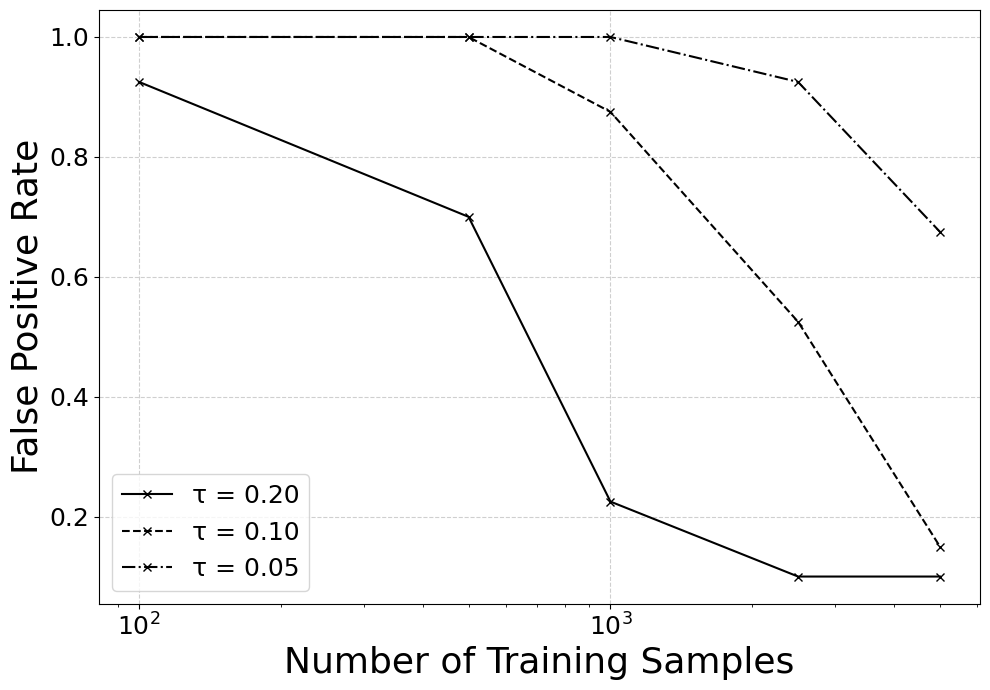}
    \caption{Sensitivity of false positives with L-SING for different thresholds and sample sizes on the Gaussian data.}
    \label{fig:sensitivity_gauss}
\end{figure}
\subsection{Butterfly Distribution}
Next, we evaluate L-SING on a non-Gaussian butterfly distribution, which exhibits nonlinear dependencies. Consider \( r \) pairs of random variables \( (X, Y) \), where:
\[
X \sim \mathcal{N}(0, 1)  \quad \quad 
Y = WX, \ \  \text{with } W \sim \mathcal{N}(0, 1). 
\]
Figure~\ref{fig:bfly_true} displays the probabilistic graph and the corresponding support of the generalized precision matrix for $r = 5$ pairs ($d=10$).
The variables are ordered $X_1, Y_1, \hdots, X_r, Y_r$, where $X_i$ corresponds to odd-numbered columns/rows in the heatmap and $Y_i$ to even-numbered ones for all $i \in [1,r]$. This ordering is consistent across all plots of the identified graph. While each one-dimensional marginal of the butterfly distribution is symmetric and unimodal, the two-dimensional marginals exhibit strong non-Gaussianity.

\begin{figure}[ht]
    \centering
    \begin{subfigure}[c]{0.4\linewidth}
        \centering
        \includegraphics[width=\linewidth]{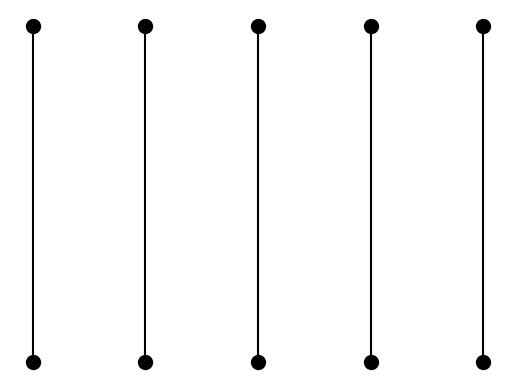}
        \caption{}
        \label{fig:bfly_imap}
    \end{subfigure}
    \hspace{0.5cm}
    \begin{subfigure}[c]{0.45\linewidth}
        \centering
        \includegraphics[width=0.75\linewidth]{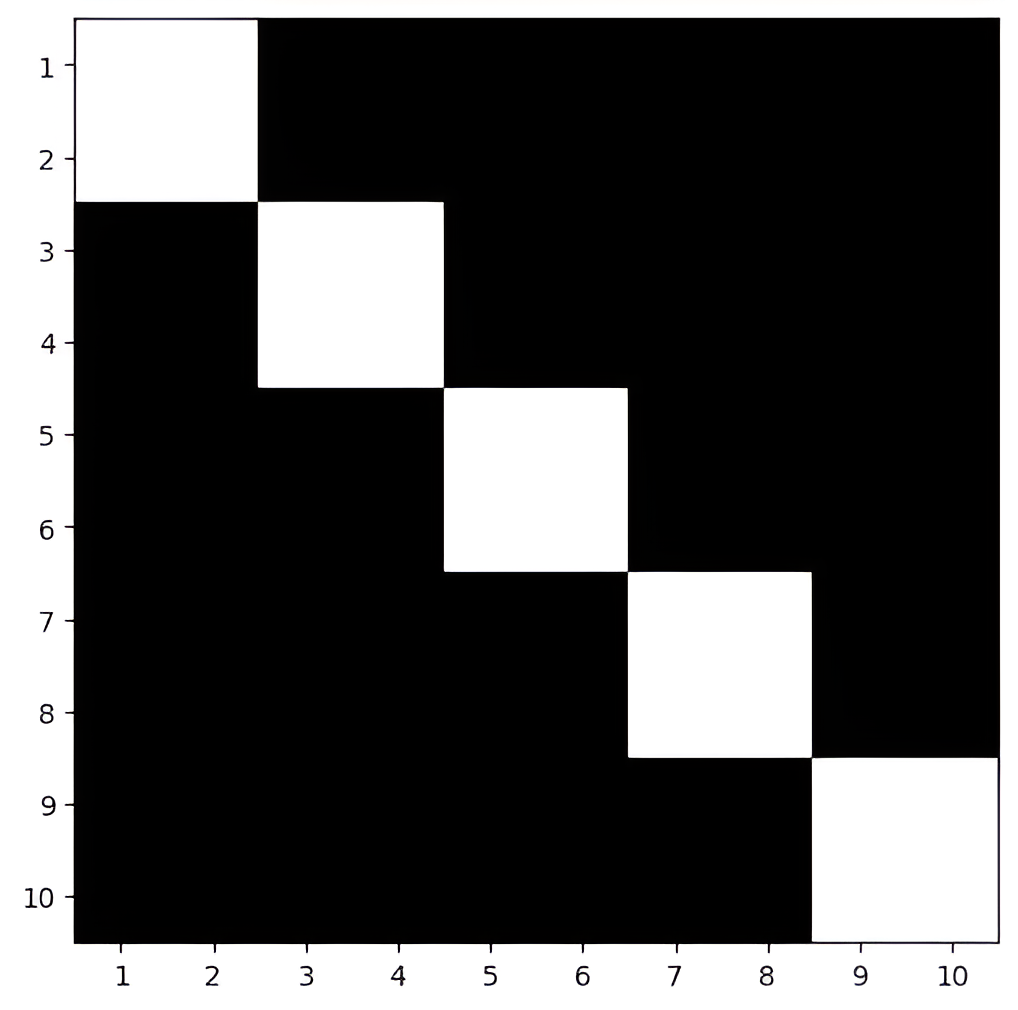}
        \caption{}
        \label{fig:bfly_adj_mat}
    \end{subfigure}
    \caption{(a) The undirected graphical model; (b) Adjacency matrix of true graph for the butterfly distribution.}
    \label{fig:bfly_true}
\end{figure}

\paragraph{Evaluation.}Figure~\ref{fig:sub3} shows the estimated generalized precision matrix $\hat{\Omega}$ for $r = 5$ pairs ($d = 10$), as computed using UMNN map components with $[64, 64, 64]$ hidden layers and $M = 5,000$ training samples. L-SING successfully recovers the true sparse structure of the graphical model for $\tau = 0.2$ and $0.1$, as shown in Figure~\ref{fig:butterfly_5p_sensitivity}.

To show the scalability of L-SING, Figure~\ref{fig:sub4} presents $\hat{\Omega}$ for $r = 20$ pairs ($d = 40$), using $M = 5,000$ training samples and the same UMNN architecture. With $\tau = 0.1$, L-SING achieves an $F_1$ score of approximately $0.941$, indicating high precision and recall in identifying the correct edges, even in higher dimensions. The FPR, approximately $6.58\times 10^{-3}$, highlights that L-SING introduces minimal spurious edges. These results show that L-SING accurately recovers the structure of the butterfly distribution, even as the dimensionality increases.\par 

\begin{figure}[!ht]
    \centering
    \begin{subfigure}[b]{0.4\linewidth}
        \centering
        \includegraphics[width=0.91\linewidth]{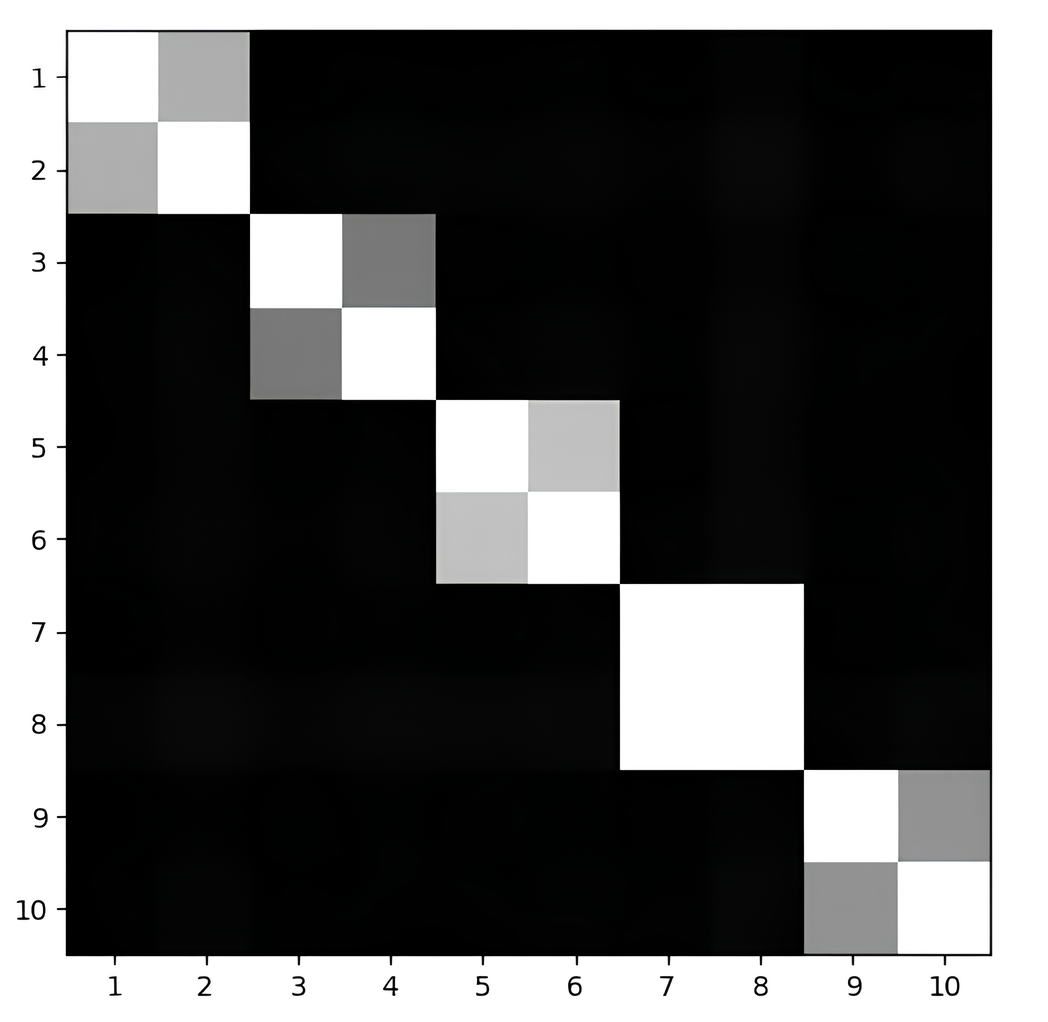}
        \caption{$\hat{\Omega}^{\text{L-SING}}$ for $d=10$}
        \label{fig:sub3}
    \end{subfigure}
    \hspace{0.5cm}
    \begin{subfigure}[b]{0.4\linewidth}
        \centering
        \includegraphics[width=1.03\linewidth]{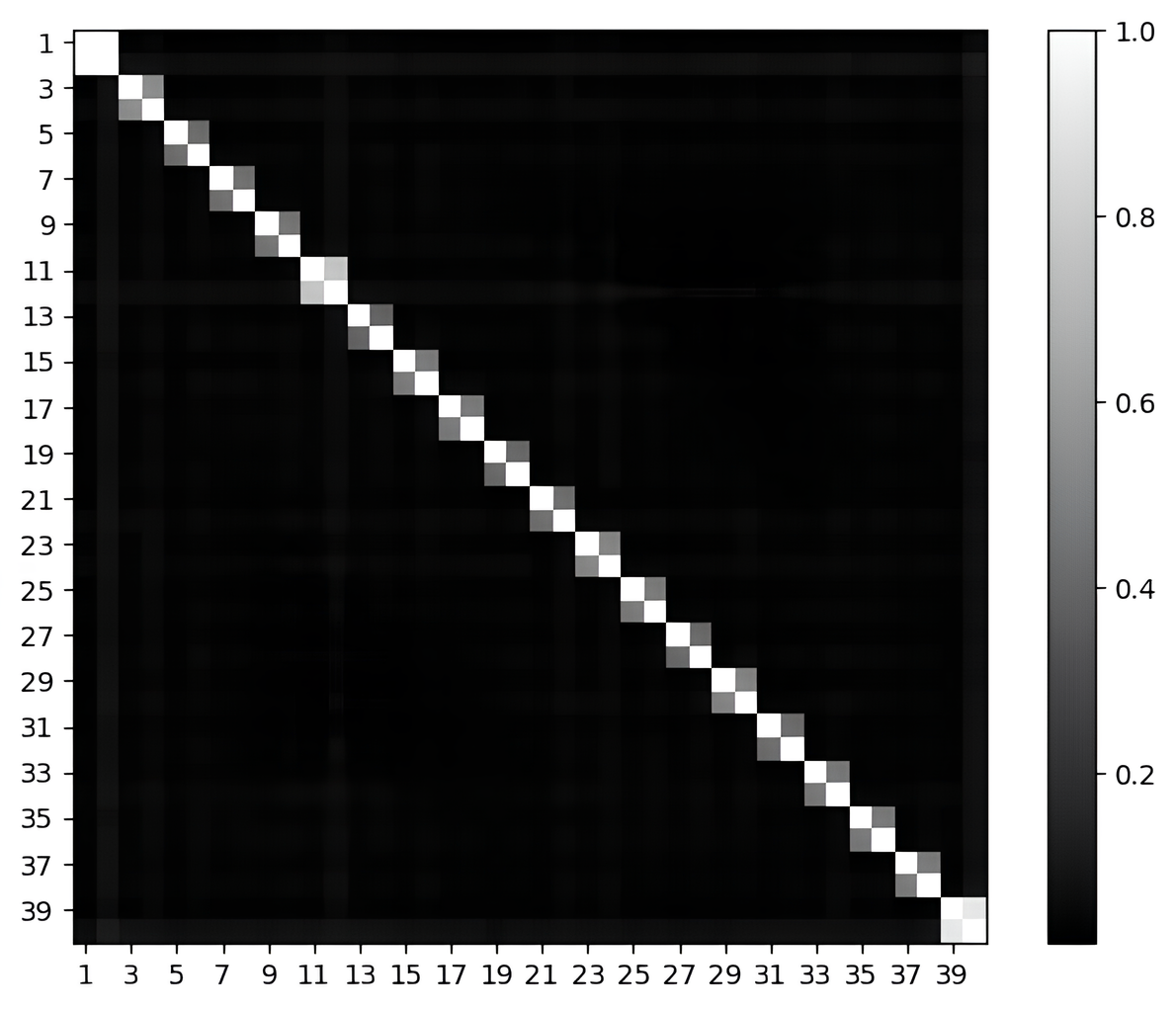}
        \caption{$\hat{\Omega}^{\text{L-SING}}$ for $d=40$}
        \label{fig:sub4}
    \end{subfigure}
    \caption{Estimated generalized precision matrix for the Butterfly distribution with $d=10$ and $d=40$ variables using L-SING with $N = 10,000$ evaluation samples.}
    \label{fig:overall2}
\end{figure}

\begin{figure}[!ht]
    \centering
    \includegraphics[width=0.8\linewidth]{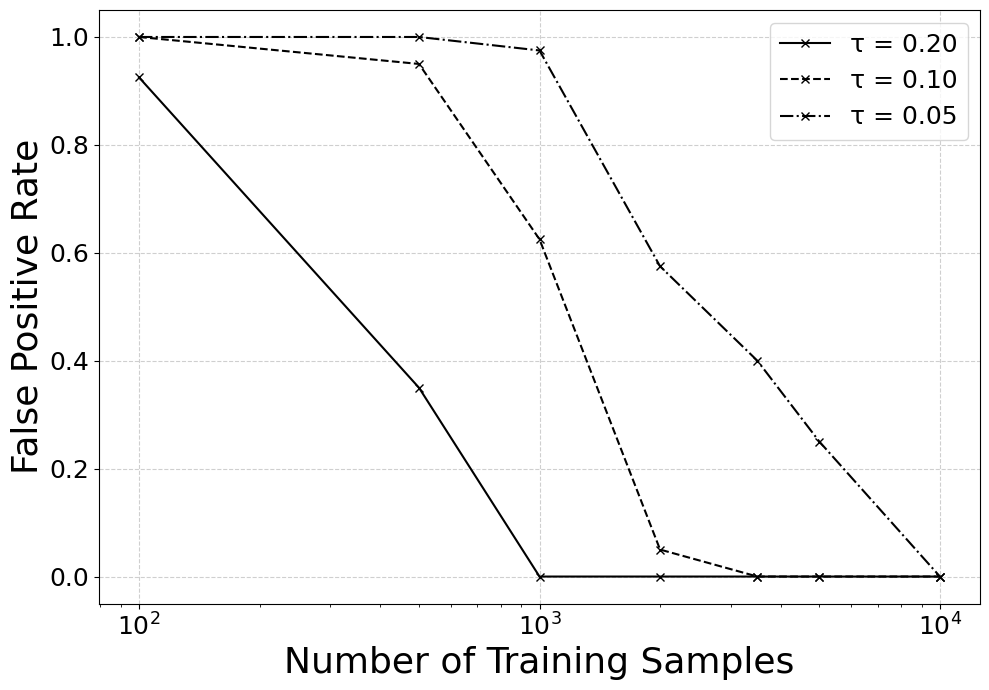}
    \caption{Sensitivity of false positives with L-SING on the 10-dimensional butterfly distribution.}
    \label{fig:butterfly_5p_sensitivity}
\end{figure}

Figure~\ref{fig:bfly_others} shows that both GLASSO and the nonparanormal (npn) method yield incorrect graphs for this dataset. These methods only identify the diagonal entries in Figure~\ref{fig:bfly_adj_mat}, corresponding to self-dependencies of each variable, and fail to recover the  dependencies between $X_i$ and $Y_i$ for all $i \in [1, r]$. GLASSO's failure arises from its reliance on a Gaussian assumption, which does not hold for the butterfly distribution. Similarly, the nonparanormal method, which applies a truncated ECDF transformation to each variable's marginal distribution before running GLASSO, fails because the butterfly distribution lies outside the class of distributions assumed by~\citet{JMLR:v10:liu09a}. Consequently, it cannot recover the true edges between variable pairs.

\begin{figure}[!ht]
    \centering
    \begin{subfigure}[b]{0.46\linewidth}
        \centering
        \includegraphics[width=0.9\linewidth]{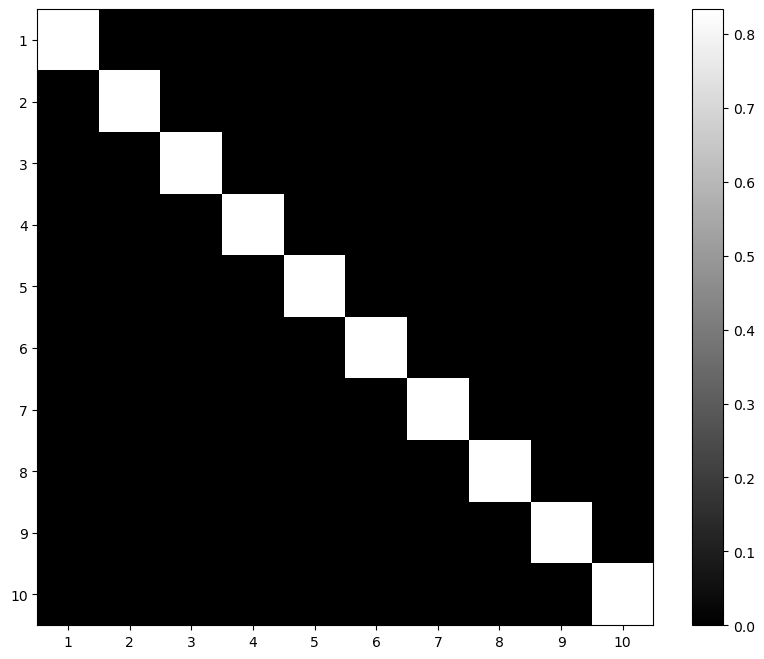}
        \caption{$\hat{\Omega}^{\text{npn}}$}
        \label{fig:bfly_npn}
    \end{subfigure}
    \begin{subfigure}[b]{0.46\linewidth}
        \centering
        \includegraphics[width=0.9\linewidth]{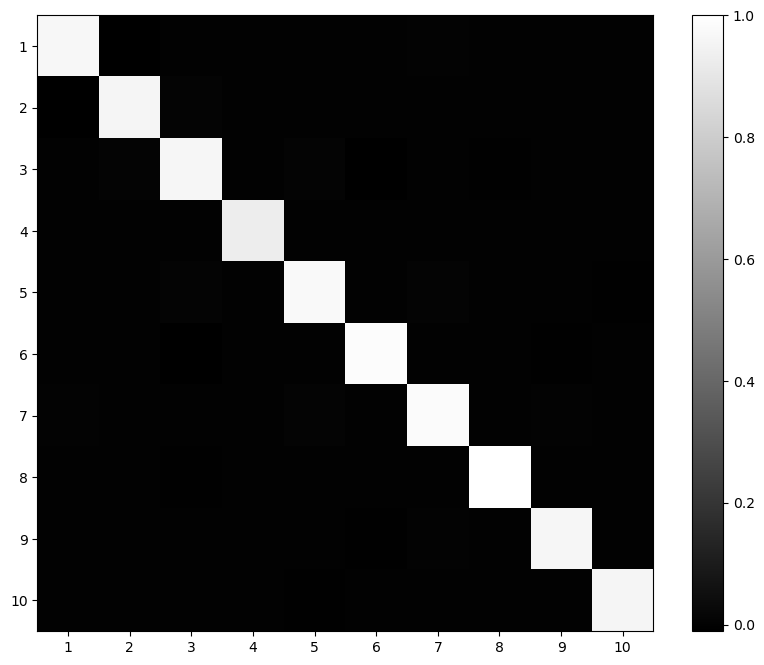}
        \caption{$\hat{\Omega}^{\text{GLASSO}}$}
        \label{fig:bfly_glasso}
    \end{subfigure}
    \caption{Conditional independencies with (a) the nonparanormal and (b) GLASSO for the butterfly distribution.}
    \label{fig:bfly_others}
\end{figure}

\subsection{Ovarian Cancer Dataset}\label{sec:oc_dataset}
Finally, we address question (2) by demonstrating the scalability of L-SING on the high-dimensional curated Ovarian Data~\citep{Ganzfried2013}, comprising gene expression profiles from 578 ovarian cancer patients sourced from The Cancer Genome Atlas (TCGA). Following the data processing procedure in~\citet{Shutta2022}, we selected biologically relevant genes. Specifically, we identified two gene sets from the Molecular Signatures Database: genes down- and up-regulated in mucinous ovarian tumors compared to normal ovarian epithelial cells. After intersecting these genes with those available in TCGA, the final dataset included 156 genes (variables) and 578 samples, split into 346 training, 117 evaluation, and 115 validation samples. For L-SING, estimating $S^k$ and computing the corresponding the matrix $\hat{\Omega}_k$ took 42.3 seconds, while GLASSO took 13.1 seconds.

\paragraph{Evaluation. }Figure~\ref{fig:lsing_oc} presents $\hat{\Omega}$, as computed using UMNN map components with  $[64, 128, 128]$ hidden layers. Figure~\ref{fig:lsing_tau_oc} shows the effect of $\tau$ on graph sparsity, showing that larger $\tau$ leads to sparser graphs, with the most significant changes in edge count and sparsity occurring for $\tau \in [0, 0.4]$. This range is critical for estimating the graph. Rather than relying on manual graph inspection, we follow the threshold selection procedure in~\citet{Shutta2022} and set $\tau = 0.2$ based on sparsity and edge count. Notably, $58\%$ of the entries in $\hat{\Omega}^{\text{L-SING}}$ are less than $0.2$, resulting in a sparse graph as weak connections are effectively pruned.

\begin{figure}[!ht]
    \centering
    \begin{subfigure}[c]{0.45\linewidth}
        \centering
        \includegraphics[width=\linewidth]{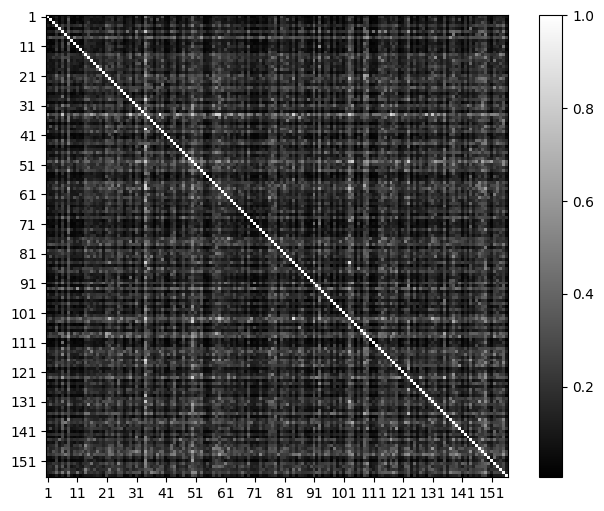}
        \caption{$\hat{\Omega}^{\text{L-SING}}$}
        \label{fig:lsing_oc}
    \end{subfigure}
    \begin{subfigure}[c]{0.54\linewidth}
        \centering
        \includegraphics[width=\linewidth, height=4.6cm]{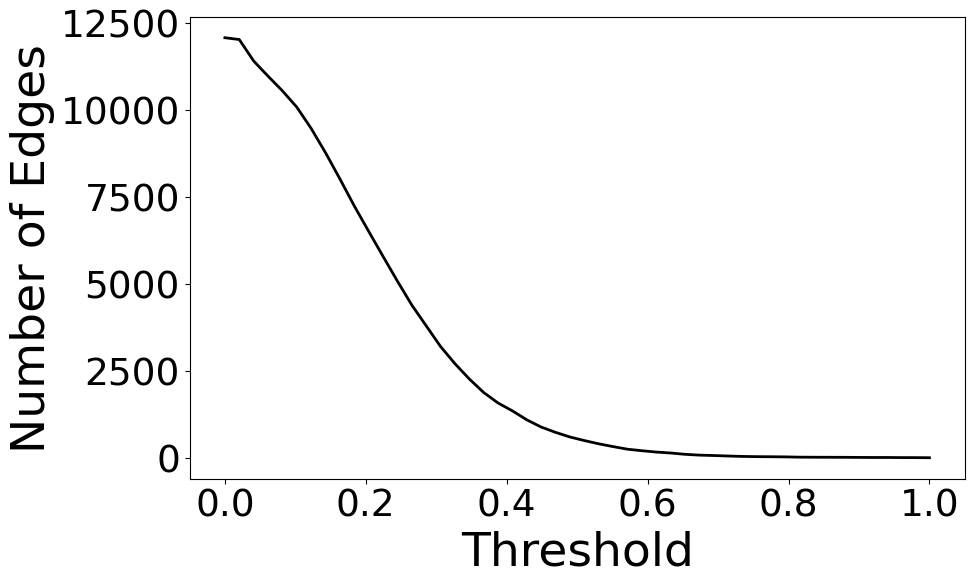}
        \caption{}
        \label{fig:lsing_tau_oc}
    \end{subfigure}
    \caption{(a) Generalized precision for the Ovarian Cancer dataset; (b) Sensitivity of recovered edge count to changes in the threshold $\tau$.}
    \label{fig:OC_ls}
\end{figure}

To avoid biases in model interpretation due to qualitative or visual inspection, we adopt a quantitative approach using centrality measures to compare against existing methods, given the absence of a ground truth graph for this problem. Specifically, we compute an average centrality rank based on betweenness, degree, hubscore, and closeness centralities. Using the support of the generalized precision matrix computed by L-SING (Figure~\ref{fig:lsing_oc}), we observe that the gene CTSE exhibits the highest mean centrality rank. This result aligns with prior findings by~\citet{Marquez2005}, which identify CTSE as an up-regulated and specific marker for mucinous ovarian cancers.

We compare these results to those of~\citet{Shutta2022}, who use GLASSO (Figure~\ref{fig:glasso_scaled_oc}). Their analysis highlights EPCAM as the second most important gene in ovarian cancer, which is supported by~\citet{Spizzo2006}, whereas CTSE is not identified as significant. In contrast, L-SING ranks EPCAM 14th out of 156 genes based on mean centrality. A direct comparison between L-SING and GLASSO is challenging due to the lack of a ground truth, as L-SING captures additional non-Gaussian dependencies that GLASSO, by design, cannot detect.

Next, we compare $\hat{\Omega}^{\text{GLASSO}}$ and $\hat{\Omega}^{\text{npn}}$, where $\hat{\Omega}^{\text{npn}}$ is obtained by applying a truncated ECDF transformation to the data before applying GLASSO. The difference, measured by the Frobenius norm, is $\|\hat{\Omega}^{\text{GLASSO}} - \hat{\Omega}^{\text{npn}}\|_F = 6.43$, indicating that relying solely on Gaussian-based methods provides an incomplete representation of the network structure. Our findings demonstrate that parametric methods like GLASSO may fail to detect non-Gaussian relationships, which underscores the advantage of using L-SING for recovering nonlinear dependencies.

\begin{figure}[!ht]
    \centering
    \begin{subfigure}{0.45\linewidth}
        \centering
        \includegraphics[width=0.84\linewidth]{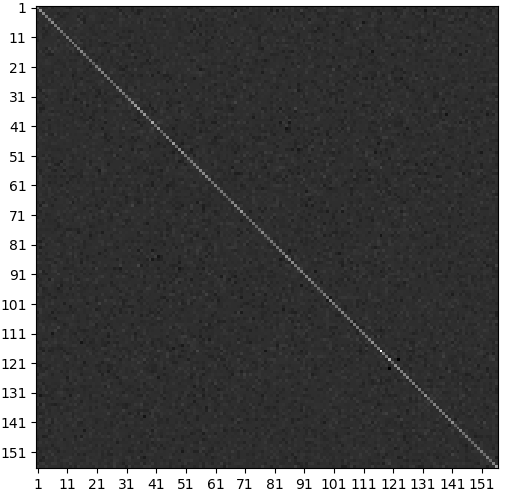}
    \caption{$\hat{\Omega}^{\text{GLASSO}}$}
        \label{fig:glasso_scaled_oc}
    \end{subfigure}%
    \begin{subfigure}{0.45\linewidth}
        \centering
        \includegraphics[width=\linewidth]{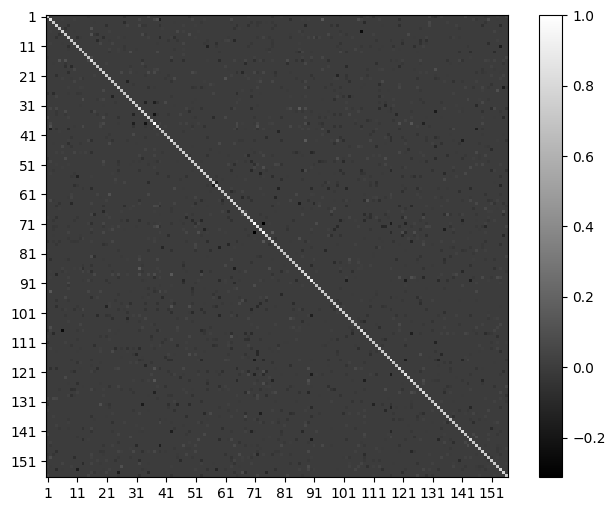}
        \caption{$\hat{\Omega}^{\text{npn}}$}
        \label{fig:npn_scaled_oc}
    \end{subfigure}
    \caption{Estimated pairwise conditional independencies on evaluation samples from the  Ovarian Cancer problem.}
    \label{fig:combined_figure}
\end{figure}

\section{Conclusion}
We have proposed L-SING, a method for learning the structure of high-dimensional graphical models underlying non-Gaussian distributions using transportation of measures. Unlike previous methods that estimate the joint distribution all at once, L-SING constructs a generalized precision matrix by learning each variable’s neighborhood independently. This local approach allows for parallelization, making L-SING computationally tractable (e.g., more memory efficient than global methods) in high-dimensional settings. We have also shown the broad applicability of L-SING by establishing theoretical connections to existing methods and through empirical comparisons. Future work includes extending L-SING to handle mixed continuous and discrete variables, as well as developing consistent thresholding and scoring strategies to reduce sensitivity to tuning parameters—similarly to the approach in~\citet{zhao2024highdimensionalfunctionalgraphicalmodel} for the Gaussian setting.

\section*{Acknowledgements}
SL acknowledges support from the Citadel Global Fixed Income SURF Endowment. YM acknowledges support from DOE ASCR award DE-SC0023187 and from the Office of Naval Research under award N00014-20-1-259. RB is grateful for support from the von K\'{a}rm\'{a}n instructorship at Caltech, the Air Force Office of Scientific Research MURI on “Machine Learning and Physics-Based Modeling and Simulation” (award FA9550-20-1-0358) and a Department of Defense (DoD) Vannevar Bush Faculty Fellowship (award N00014-22-1-2790) held by Andrew M.\ Stuart. 
\bibliography{references}

\appendix

\section{Theoretical Details} \label{app:theoretical_dets}

\begin{theorem} The optimization problem for learning the map component $S^k$:
\begin{align} \label{eq:OPT_problem_appendix}
\min_{S^k} \mathbb{E}_{\pi(\mathbf{x})} \left[\frac{1}{2} (S^k)^2(\mathbf{x}) - \log \partial_k S^k (\mathbf{x})\right] \\
\text{s.t. } S^k \in \mathcal{S}_k; \ \ \partial_k S^k >0, \ \ \ \ \ \  (\pi - \text{a.e.}), \nonumber
\end{align}
corresponds to minimizing the expected Kullback-Leibler (KL) divergence objective in~\eqref{eq:KLobjective} from the standard Gaussian reference distribution $\eta_k = \mathcal{N}(0,1)$ to  pushforward of the target conditional distribution $S^k(\cdot,x_{-k})_\sharp \pi(\cdot|x_{-k})$.
\end{theorem}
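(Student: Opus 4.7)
The plan is to unfold the KL divergence in~\eqref{eq:KLobjective} using the change-of-variables formula for the pushforward measure $S^k(x_{-k},\cdot)_\sharp \pi(\cdot|x_{-k})$ and then check that, once terms not depending on $S^k$ are dropped, the resulting expression matches the objective in~\eqref{eq:OPT_problem_appendix}. Because each $S^k$ is required to be strictly monotone in its $k$-th argument (the pointwise constraint $\partial_k S^k > 0$), the map $x_k \mapsto S^k(x_{-k},x_k)$ is a diffeomorphism from $\mathbb{R}$ to $\mathbb{R}$ for each fixed $x_{-k}$, so the change-of-variables formula from Section~\ref{sec:tm} applies pointwise in $x_{-k}$.

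First I would write the pushforward density explicitly as
\begin{equation*}
S^k(x_{-k},\cdot)_\sharp \pi(\cdot|x_{-k})(z) \;=\; \frac{\pi(x_k|x_{-k})}{\partial_k S^k(x_{-k},x_k)}\bigg|_{x_k = S^k(x_{-k},\cdot)^{-1}(z)},
\end{equation*}
and substitute this into the definition of the KL divergence $D_{\mathrm{KL}}(S^k(x_{-k},\cdot)_\sharp \pi(\cdot|x_{-k})\,\|\,\eta_k)$. Next I would change variables back from $z$ to $x_k$ in the resulting integral; the Jacobian factor $\partial_k S^k$ cancels the one sitting in the pushforward density, leaving
\begin{equation*}
D_{\mathrm{KL}}\bigl(S^k(x_{-k},\cdot)_\sharp \pi(\cdot|x_{-k})\,\big\|\,\eta_k\bigr)
= \mathbb{E}_{\pi(x_k|x_{-k})}\bigl[\log \pi(x_k|x_{-k}) - \log \partial_k S^k(\mathbf{x}) - \log \eta_k\bigl(S^k(\mathbf{x})\bigr)\bigr].
\end{equation*}
Then I would take the outer expectation over $\pi(x_{-k})$; by the tower property this collapses into a single expectation under the joint $\pi(\mathbf{x})$.

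Finally, I would specialize to $\eta_k = \mathcal{N}(0,1)$, so that $-\log \eta_k(S^k(\mathbf{x})) = \tfrac{1}{2}(S^k)^2(\mathbf{x}) + \tfrac{1}{2}\log(2\pi)$, and note that the term $\mathbb{E}_{\pi}[\log \pi(x_k|x_{-k})]$ (the negative conditional entropy) as well as the additive constant $\tfrac12\log(2\pi)$ do not depend on $S^k$. Dropping these $S^k$-independent terms yields exactly
\begin{equation*}
\mathbb{E}_{\pi(\mathbf{x})}\!\left[\tfrac{1}{2}(S^k)^2(\mathbf{x}) - \log \partial_k S^k(\mathbf{x})\right],
\end{equation*}
which is the objective in~\eqref{eq:OPT_problem_appendix}; the argmin therefore coincides with the argmin of the expected KL. I would also briefly justify that finiteness of the negative conditional entropy and of $\mathbb{E}_{\pi}[(S^k)^2]$ is assumed so the subtraction of the constant is well-defined.

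The main obstacle is essentially bookkeeping: carrying out the change of variables cleanly so that the Jacobian in the pushforward density cancels against $dz = \partial_k S^k \, dx_k$, and making sure the strict positivity of $\partial_k S^k$ (guaranteed by the constraint set $\mathcal{S}_k$, via the UMNN parameterization of Section~\ref{sec:param_tm}) is used to justify invertibility of $S^k(x_{-k},\cdot)$ at every conditioning value. No hard inequality is required; the rest is just linearity of expectation and discarding $S^k$-independent constants.
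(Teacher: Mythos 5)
Your proposal is correct and follows essentially the same route as the paper's proof: expand the KL objective via the change-of-variables formula, use the tower property to collapse to an expectation under the joint $\pi(\mathbf{x})$, and drop the $S^k$-independent conditional entropy and Gaussian normalization constant. The only difference is presentational — you derive the equivalence between the pushforward and pullback forms of the KL divergence by carrying out the change of variables explicitly, whereas the paper invokes the invariance of the KL divergence under invertible transformations and then expands the pullback density $S^k(x_{-k},\cdot)^\sharp \eta_k$.
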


\begin{proof} Recall the objective for the map in~\eqref{eq:KLobjective} is given by the functional $$S^k \mapsto \mathbb{E}_{\pi(x_{-k})}[D_{\text{KL}}(S^k(x_{-k},\cdot)_{\sharp} \pi(\cdot|x_{-k}) || \eta_k)].$$
From the invariance of the KL divergence to invertible transformations, the objective is equivalent to
$$S^k \mapsto \mathbb{E}_{\pi(x_{-k})}[D_{\text{KL}}(\pi(\cdot|x_{-k}) || S^k(x_{-k},\cdot)^\sharp\eta_k)].$$
Using the definition for the KL divergence, the objective can be expressed as 
\begin{align*}
\mathbb{E}_{\pi(x_{-k})}[D_{\text{KL}}(\pi(\cdot|x_{-k}) || S^k(x_{-k},\cdot)^\sharp \eta_k)] &= \mathbb{E}_{\pi(x)}\left[\log \pi(x_k|x_{-k})] - \mathbb{E}_{\pi(x)}[\log S^k(x_{-k},\cdot)^\sharp \eta(x_k) \right].
\end{align*}
Given that the first term is independent of $S$, minimizing the objective is equivalent to maximizing the second term, which is the  log-likelihood of the pullback distribution in expectation over the data distribution $\pi$. 

Then, choosing $\eta_k$ to be the standard isotropic Gaussian distribution $\mathcal{N}(0, 1)$, the second term in the objective becomes:
\begin{align*}
   -\mathbb{E}_{\pi(x)}[\log S^k(x_{-k},\cdot)^\sharp \eta(x_k)] &= -\mathbb{E}_{\pi(x)}\left[\log \left(\eta_k \circ S^k(x) \cdot |\partial_k S^k(x)|\right)\right] \\
   &= -\mathbb{E}_{\pi(x)} \left[\log\left(\frac{1}{\sqrt{2\pi}}\right) - \frac{1}{2}(S^k(x))^2 + \log|\partial_{x_k} S^k(x)|\right].
\end{align*}
 We observe that $\log \frac{1}{\sqrt{2\pi}}$ is constant with respect to $S^k$. Thus, we can minimize the KL divergence in~\eqref{eq:KLobjective} by minimizing the second and third terms above. 
\end{proof}

We note that when samples of $\pi$ are given, we can use using Monte Carlo to approximate the expectation in~\eqref{eq:OPT_problem_appendix}. 

The optimization problem with the empirical objective function for learning the transport map component $S^k$ becomes:
\[
\min_{S^k} \frac{1}{M} \sum_{i=1}^M\frac{1}{2} \left(S^k\right)^2(\mathbf{x}^i) - \log \partial_k S^k(\mathbf{x}^i).
\]
This corresponds to the objective in~\eqref{eq:tm} without the regularization term, i.e., $\lambda = 0$.

\section{Experimental Details} \label{app:exp_details}
In this section we detail additional supporting information for the computational experiments presented in Section~\ref{sec:exp}.

\subsection{Gaussian Experiments}
To validate L-SING, we conducted experiments using  synthetic Gaussian datasets that are drawn according to  distributions with known conditional independence structure due to the availability of the ground truth graph for comparison. The datasets were generated as follows:
\begin{enumerate} \itemsep-1pt
    \item Construct a precision matrix $\Theta \in \mathbb{R}^{d \times d}$ using the function \texttt{make\_sparse\_spd\_matrix} in \texttt{scikit-learn}, where the probability of an entry being zero is set to $0.95$ and the nonzero entries are drawn uniformly from the range $[0.3, 0.8]$. %
    We %
    take the covariance matrix to be the inverse of $\Theta$, i.e., $\Sigma = \Theta^{-1} $. %
    \item $M$ i.i.d.\thinspace samples \(\{\mathbf{x}^{i}\}_{i=1}^M\) were generated from \(\mathcal{N}(\mu, \Sigma)\), where $\mu$ was set to $\mathbf{0}$. 
    \item The data were centered and standardized, normalizing each feature to have zero mean and unit variance.
\end{enumerate}
For these experiments, we set the dimension \(d = 10\), allowing a tractable comparison between the L-SING and other algorithms for a fair evaluation of computational efficiency.

\subsubsection{Implementation Details}
All experiments were implemented in Python 3.10 and performed on a Linux system with a CPU and 16GB of RAM. %
To ensure that the experiments are reproducible, we fixed the random seed %

We parameterized the UMNN using a neural network architecture with three hidden layers, each containing 64 hidden units, following the implementation in~\citet{NEURIPS2019_2a084e55}. The integral for the transport map was evaluated using a Clenshaw-Curtis quadrature rule with 21 nodes. %
Prior experiments by~\citet{NEURIPS2019_2a084e55} indicated that the number of quadrature nodes has a minimal effect on the accuracy.

While we tested larger UMNN architectures, e.g., with 128 hidden units in each of the 3 layers, these yielded only marginal improvements in negative log-likelihood given by \(-0.1 \pm 0.05\) and negligible changes in the estimated precision matrix. In contrast, reducing the model to fewer than three hidden layers increased the negative log-likelihood by approximately \(+0.15 \pm 0.05\).

In our experiments, we considered the following sets of hyperparameters:
\begin{itemize} \itemsep0pt
    \item Training sizes: \( M \in \{100, 500, 1000, 2500, 5000\} \)
    \item Regularization values: \( \lambda \in \{1, 0.1, 0.01, 0.001, 0\} \)
    \item Threshold values: \( \tau \in \{0.2, 0.1, 0.05\} \)
\end{itemize}

The final results, including those presented in Figure~\ref{fig:overall_gauss}, were produced using a training set of size \(M = 5000\), which achieved a balance between computational efficiency and accuracy of the recovered generalized precision matrix.

\subsubsection{Procedure}
Each experiment followed the workflow outlined in L-SING (Algorithm~\ref{alg:lsing}):
\begin{enumerate} \itemsep0pt
    \item Generate a training set of size \(M\).
    \item Generate validation and evaluation sets, each containing \(N = 10,000\) samples.
    \item Use the training set to learn the transport map for each conditional distribution.
    \item Tune the hyperparameter \(\lambda\) using the validation set.
    \item Compute the generalized precision matrix by evaluating the model for each conditional distribution on the evaluation set.
\end{enumerate}

\subsubsection{Evaluation Metrics}
Two metrics were used to assess the quality of the learned transport map and recovery of the graph structure. The negative log-likelihood (NLL) was used to evaluate the quality of the learned transport map, while the false positive rate (FPR) was used to measure the accuracy of edge recovery in the graph structure with respect to the threshold \(\tau\).

We conducted one run per hyperparameter configuration. The computational time varied according to the size of the training set and the architecture of the model. Generally, larger models and datasets required more time, but provided only marginal improvements, making smaller architectures and moderate training sizes preferable.

\subsection{Butterfly Experiments}  

The butterfly distribution, as described in \citet{NIPS2017_ea8fcd92}, was used to evaluate the tractability of L-SING in higher dimensions and for direct comparison with SING. This distribution displays strongly non-Gaussian features, making it a standard benchmark in the literature for graph recovery problems beyond simpler Gaussian settings.  

The considered dataset $\{\mathbf{x}^{i}\}_{i=1}^M$ for this problem was generated following the procedure detailed in the numerical results section. Two problem sizes were considered:  
\begin{itemize} \itemsep0pt
    \item $d = 10$ (with $r = 5$ pairs) for direct comparison with SING.  
    \item $d = 40$ (with $r = 20$ pairs) to test the scalability of L-SING to higher dimensions.  
\end{itemize}

\subsubsection{Implementation Details}  
All experiments were implemented in Python 3.10 and executed on a Linux system with a CPU and 16GB of RAM. 

The UMNN was parameterized with a neural network architecture consisting of three hidden layers, each containing 64 units. This architecture was chosen based on preliminary experiments similar to those in the Gaussian case.  

In these experiments, we considered the following hyperparameter ranges:  
\begin{itemize} \itemsep0pt
    \item Training sizes: $M \in \{100, 500, 1000, 2500, 5000\}$  
    \item Regularization values: $\lambda \in \{1, 0.1, 0.01, 0.001, 0\}$  
    \item Threshold values: $\tau \in \{0.2, 0.1, 0.05\}$ 
\end{itemize}

\subsubsection{Evaluation Metrics}  
To evaluate performance, we measured negative log-likelihood, false positive rate and F$_1$ score (for $d = 40$) for balanced precision and recall metrics of graph structure recovery.

The F$_1$ score was included for the 40-dimensional problem because the number of potential edges grows combinatorially with the dimension $d$. In this scenario, precision (the fraction of recovered edges that are true positives) and recall (the fraction of true edges successfully recovered) can behave very differently. Thus, the F$_1$ score provides a single measure of recovery performance that accounts for these two metrics.

\subsection{Ovarian Cancer: Gene Expression Dataset Experiments}

To demonstrate the practical applicability and scalability of L-SING, we applied L-SING to a high-dimensional gene expression dataset ($d = 156$ variables) from $578$ participants with ovarian cancer. We used the publicly available \texttt{curatedOvarianData} dataset from the R package of~\citet{Ganzfried2013}.%
This dataset allows for reproducibility, validation, and comparison with prior studies, such as~\citet{Shutta2022}. The focus on gene regulatory networks aligns with the discussion in the introduction, where structure learning is important in revealing disease mechanisms and therapeutic targets.

This dataset is a challenging task because of its relatively small sample size and high dimensionality, in contrast to the synthetic experiments where sample sizes could be freely adjusted through additional generation of new samples. The dataset was split as follows:
\begin{itemize} \itemsep0pt
    \item Training set: 346 samples
    \item Validation set: 115 samples
    \item Evaluation set: 117 samples
\end{itemize}

We followed the pre-processing steps described by \citet{Shutta2022}, using the author's publicly available code. The steps include:
\begin{enumerate} \itemsep0pt
    \item Loading the ovarian cancer dataset from the \texttt{curatedOvarianData} R package.
    \item Standardizing the gene expression data by computing the mean and standard deviation for each gene (i.e., for each variable) across all 578 participants. Then, each gene's expression values were normalized by subtracting the gene's mean and dividing by its standard deviation. Thus, for each gene, the standardized values have a mean of 0 and a standard deviation of 1 across the entire dataset. %
    \item Selecting two sets of genes from the Molecular Signatures Database, associated with genes regulated down and up in mucinous ovarian tumors.
    \item Intersecting the selected genes with those available in the TCGA ovarian cancer dataset.
    \item Extracting a final dataset of 156 genes (variables) and 578 samples (participants).
\end{enumerate}

\subsubsection{Experimental Setup}

The preprocessed dataset was used to evaluate L-SING as well as the nonparanormal, and GLASSO methods for recovering the generalized precision matrix $\hat{\Omega}$. All experiments were implemented in Python~3.10 and conducted on a Linux system with a CPU and 16GB of RAM. For additional pre-processing details in R, we refer readers to~\citet{Shutta2022}.

The hyperparameter ranges we considered for this experiment were:
\begin{itemize} \itemsep0pt
    \item Regularization values: $\lambda \in \{1, 0.1, 0.01, 0.001, 0\}$
    \item Threshold values $\tau$: Uniform grid of 50 values between 0 and 1
\end{itemize}
In Section~\ref{sec:oc_dataset}, we demonstrate the relationship between $\tau$ and graph sparsity in terms of edge counts. Using a fine grid of hyper-parameters was especially important for this dataset due to the small sample size. We note that the optimal threshold accounts for the higher variance and statistical error in both map estimation and the computed entries of the generalized precision matrix.

We used a larger UMNN architecture for this dataset consisting of three hidden layers with $64$, $128$, and $128$ hidden units. Despite the increased computational complexity, the smaller size for the training set allowed for efficient training. On average, the time required to learn the map component $S^k$ and compute the corresponding entries in the the $k$-th row/column of the generalized precision matrix was $42.3 \pm 0.05$ seconds for each $k$. Computing all $S^k$ components in parallel and independently makes L-SING tractable for high-dimensional problems.

\subsubsection{Evaluation Metrics}

For this dataset, we used $\tau = 0.2$ to compute the adjacency matrix, based on successful graph structure recovery in other experiments. %
Given that the dataset lacks a ground-truth graph, it is not possible to compute metrics such as False Positive Rate (FPR) and $F_1$ scores. Instead, we evaluated other commonly reported measures in the literature:
\begin{enumerate} \itemsep0pt
    \item Network measures: Degree, hub score, closeness, and betweenness.
    \item Gene ranking: Metrics were normalized through ranking and averaging, identifying each gene’s relative importance as discussed in \citet{Shutta2022}.
\end{enumerate}

Specifically, we compared the top-ranked genes identified by L-SING with those from GLASSO and verified their relevance relative to results reported in the literature. 

Finally, to assess the impact of non-Gaussian dependencies, we also compared GLASSO with and without a nonparanormal transformation in Section~\ref{sec:oc_dataset}. This analysis shows the importance of algorithms like L-SING, which do not assume specific form for the underlying data distribution. %

\end{document}